\definecolor{commentgreen}{RGB}{74,112,35}
\newtheorem{fact}{Fact}
\DeclareMathOperator*{\argmin}{arg\,min}
\DeclareMathOperator*{\diag}{diag}
\newcommand{\norm}[1]{\left\lVert#1\right\rVert}
\newcommand{\bE}{\mathbb{E}}
\newcommand{\bP}{\mathbb{P}}
\newcommand{\bR}{\mathbb{R}}
\newcommand{\cC}{\mathcal{C}}
\newcommand{\cD}{\mathcal{D}}
\newcommand{\cL}{\mathcal{L}}
\newcommand{\cO}{\mathcal{O}}
\newcommand{\cP}{\mathcal{P}}
\newcommand{\cR}{\mathcal{R}}
\newcommand{\cW}{\mathcal{W}}
\newcommand{\cX}{\mathcal{X}}
\newcommand{\cY}{\mathcal{Y}}
\newcommand{\ffi}{\mathbf{i}}
\newcommand{\falp}{\pmb{\alpha}}
\newcommand{\fbet}{\pmb{\beta}}
\newcommand{\flam}{\pmb{\lambda}}
\newcommand{\fG}{\mathbf{G}}
\newcommand{\fH}{\mathbf{H}}
\newcommand{\fracpartial}[2]{\frac{\partial #1}{\partial  #2}}
\newcommand{\mixup}[2]{\tilde{#1}_{#2}^{(\text{mixup})}}
\newcommand{\cutmix}[2]{\tilde{#1}_{#2}^{(\text{cutmix})}}
\newcommand{\msda}[2]{\tilde{#1}_{#2}^{(\text{MSDA})}}
\newtheorem{theorem}{Theorem}
\newtheorem{lemma}{Lemma}
\theoremstyle{definition}
\newtheorem{definition}{Definition}[section]
\definecolor{lightgrey}{gray}{0.8}
\definecolor{medgrey}{gray}{0.6}
\definecolor{darkgrey}{gray}{0.4}
\definecolor{Gray}{gray}{0.9}
\definecolor{darkergreen}{RGB}{21, 152, 56}
\definecolor{red2}{RGB}{252, 54, 65}
\def\onedot{.\xspace}
\def\eg{\emph{e.g}\onedot} 
\def\ie{\emph{i.e}\onedot}
\def\etal{\emph{et al}\onedot}
\newcommand{\ours}{HMix\xspace}
\newcommand{\oursfull}{Hybrid version of Mixup and CutMix\xspace}
\newcommand{\oursgaussian}{GMix\xspace}
\newcommand{\oursgaussianfull}{Gaussian Mixup\xspace}
\newcommand{\stochmixupcutmix}{Stochastic Mixup \& CutMix\xspace}
\newcommand{\revision}[1]{#1}
\title{A Unified Analysis of Mixed Sample Data Augmentation: A Loss Function Perspective}
\author{
  Chanwoo Park\,\thanks{Equal contribution} %
  \\
  {MIT}\\
  {~~~~~~cpark97@mit.edu~~~~~~~~} 
   \And
 Sangdoo Yun\,\footnotemark[1]%
  \\
  {NAVER AI Lab}\\
  {sangdoo.yun@navercorp.com} 
   \And
 Sanghyuk Chun%
  \\
  {NAVER AI Lab}\\
  {sanghyuk.c@navercorp.com}   
}
\begin{document}

\maketitle

\begin{abstract}
We propose the first unified theoretical analysis of mixed sample data augmentation (MSDA), such as Mixup and CutMix. Our theoretical results show that regardless of the choice of the mixing strategy, MSDA behaves as a pixel-level regularization of the underlying training loss and a regularization of the first layer parameters. Similarly, our theoretical results support that the MSDA training strategy can improve adversarial robustness and generalization compared to the vanilla training strategy. Using the theoretical results, we provide a high-level understanding of how different design choices of MSDA work differently. For example, we show that the most popular MSDA methods, Mixup and CutMix, behave differently, \eg, CutMix regularizes the input gradients by pixel distances, while Mixup regularizes the input gradients regardless of pixel distances. Our theoretical results also show that the optimal MSDA strategy depends on tasks, datasets, or model parameters. From these observations, we propose generalized MSDAs, a Hybrid version of Mixup and CutMix  (\textbf{\ours}) and Gaussian Mixup (\textbf{\oursgaussian}), simple extensions of Mixup and CutMix. Our implementation can leverage the advantages of Mixup and CutMix, while our implementation is very efficient, and the computation cost is almost neglectable as Mixup and CutMix. Our empirical study shows that our \ours and \oursgaussian outperform the previous state-of-the-art MSDA methods in CIFAR-100 and ImageNet classification tasks. Source code is available at {\small \url{https://github.com/naver-ai/hmix-gmix}}.
\end{abstract}

\section{Introduction}
As deep neural networks (DNNs) are data-hungry, the scale of datasets has become a foundation of modern DNN training; recent ground-breaking deep models are built upon gigantic datasets, such as 410B language tokens \cite{gpt3}, 3.5B images \cite{instagramnet}, and 1.8B image-text pairs \cite{align_google}. While such tremendously large-scale datasets are not always collectible, amplifying the dataset scale by synthesizing more data points by data augmentation techniques is common. Especially, \textit{mixed sample data augmentation} (MSDA) \cite{zhang2017mixup,tokozume2017learning,tokozume2018between,inoue2018data,yun2019cutmix,verma2019manifold,takahashi2019ricap,beckham2019adversarial,walawalkar2020attentive,yun2020videomix,chen2020gridmask,uddin2020saliencymix,kim2020puzzle,harris2020fmix,faramarzi2020patchup,qin2020resizemix,jeong2021observations,dabouei2021supermix,greenewald2021k,kim2021co,liu2021unveiling,chun2021styleaugment,jeong2021observations,sohn2022genlabel,liu2022decoupled} has become a standard technique to train a strong deep model by synthesizing a mixed sample from multiple (usually two) samples by combining both of their sample values and labels in a linear combination \cite{zhang2017mixup} or a cut-and-paste manner \cite{yun2019cutmix}. This simple idea, however, shows surprising performance enhancements in various applications, including image object recognition \cite{yun2019cutmix,kim2020puzzle,harris2020fmix,touvron2021training,dabouei2021supermix}, semi-supervised learning \cite{berthelot2019mixmatch, sohn2020fixmatch} self-supervised learning \cite{kalantidis2020hard,kim2020mixco,lee2021imix}, noisy label training \cite{Li2020DivideMix}, meta-learning \cite{yao2021improving}, semantic segmentation \cite{chang2020mixup, french2019semi}, natural language understanding \cite{guo2019augmenting, jindal2020leveraging}, and audio processing \cite{medennikov2018investigation,kim2021specmix,meng2021mixspeech}. Another advantage of MSDA beyond the performance improvements is that MSDA usually does not need domain-specific knowledge, such as strong image-specific \cite{randaug} or audio-specific \cite{specaug} transformations; hence MSDA can be universally employed by various applications. However, although MSDA shows excellent benefits in practice, there is still yet not enough understanding of how MSDA works well universally; \emph{can MSDA always show better generalization and robustness than the standard training with a theoretical guarantee?} Furthermore, the design choice of MSDA can be significantly varying and the optimal design choice is still ambiguous. For example, Lee \etal \cite{lee2021imix} showed that in self-supervised learning, Mixup is more effective than CutMix, while Ren \etal \cite{ren2022sdmp} observed opposite results.
The ambiguity is originated from the fact that we do not have a unified theoretical lens of understanding how different design choices affect the actual learning process; in short, \emph{how are Mixup and CutMix different?}

There have been several attempts to theoretically understand Mixup, a special case of MSDA \cite{zhang2020does,chidambaram2021towards,carratino2020mixup, zhang2021and}. They delve into the effect of Mixup in a loss function perspective, \eg, Mixup behaves as a regularization of the standard training \cite{zhang2020does}, or in a learning theory perspective, \eg, Mixup training can provide an upper bound for the true loss \cite{chidambaram2021towards,yao2021improving}. However, their analyses are limited to Mixup, while other MSDAs, such as CutMix, are poorly understandable through the lens of their analyses.
In this paper, we extend the theoretical results of Zhang \etal \cite{zhang2020does} and Chidambaram \etal \cite{chidambaram2021towards} to a general MSDA to provide a first unified theoretical lens for understanding how general MSDAs work by different choices of mixing strategies.
We show that MSDA behaves as an input gradient and Hessian regularization (\cref{thm::MSDA-loss}) as well as a regularizer for the first layer parameters; MSDA improves adversarial robustness (\cref{thm::MSDA-robustness}) and generalization (\cref{thm::MSDA-generalization}).
Our theoretical results show that popular MSDA methods, such as Mixup and CutMix, behave differently in terms of regularization effects.
Briefly, CutMix gives a strong regularization in the product of nearby distance pixel-level partial gradient and nearby distance Hessian of the estimated function $f$, while CutMix gives a weak regularization in the product of long-distance pixel-level partial gradient and long-distance Hessian of the estimated function $f$. In contrast, Mixup gives a regularization in gradient or Hessian of the estimated function $f$ regardless of the pixel-level distance.

From our unified theoretical lens for MSDA, we can conclude that \emph{there is no one-fit-all optimal MSDA fit to every data or model parameter}. In other words, the optimal mixing strategy depends on applications, datasets, and model architectures. It supports previous empirical observations that combining different MSDA methods (\eg, alternatively using Mixup and CutMix during training) can outperform using only one MSDA \cite{misra2019mish, bochkovskiy2020yolov4, touvron2021training, yun2021re}. From these observations, we propose two simple MSDA methods that naturally generalize Mixup and CutMix, so that it can take advantage of both methods. Our first proposed method, \oursfull \textbf{(\ours)}, mixes two samples in both Mixup and CutMix manners; it first cut-and-paste two samples as CutMix, and then it linearly interpolates the out-of-box values of two samples as Mixup. We let \ours be able to behave as both Mixup and CutMix by introducing a stochastic control parameter. Our second proposed method, \oursgaussianfull \textbf{(\oursgaussian)}, also mixes two samples in both Mixup and Cutmix manners; firstly we select a point, and then we mix two samples gradually using the Gaussian function. Our empirical results on CIFAR-100 and ImageNet show that \ours and \oursgaussian outperform the state-of-the-art MSDA methods, including Mixup, CutMix, and \stochmixupcutmix.
\section{A General Framework for Mixed Sample Data Augmentation (MSDA)}
\label{sec::prelim}
In this section, we define the formal definition of MSDA and notations. We define a training dataset as $D = \{ z_i = (x_i, y_i)\}_{i=1}^m$, randomly sampled from a distribution $\cP_{z}$. Here, $z = (x,y)$ is the input (\eg, an image) and output (\eg, the target class label) pair. Then, for randomly selected two data samples, $z_i$ and $z_j$, an augmented sample by MSDA, $\msda{z}{i,j}$, is synthesized as follows
\begin{align}
\label{eqn::msda}
\begin{split}
    \msda{z}{i,j}(\lambda, &1-\lambda) = (\msda{x}{i,j}(\lambda, 1-\lambda), \msda{y}{i,j}(\lambda, 1-\lambda))\\
    \text{where,} \quad &\msda{x}{i,j}(\lambda, 1-\lambda) = M(\lambda) \odot x_i + (1- M(\lambda)) \odot x_j \quad \text{and}\\
    &\msda{y}{i,j}(\lambda, 1-\lambda) = N(\lambda) \odot y_i + (1- N(\lambda)) \odot y_j,
\end{split}
\end{align}
where $\lambda$ is the ratio parameter between samples, drawn from $\cD_\lambda$ (usually Beta distribution).
$\odot$ means a component-wise multiplication in vector (or matrix). $M(\lambda)$ is a random variable conditioned on $\lambda$ that indicates how we mix the input (\eg, by linear interpolation \cite{zhang2017mixup} or by a pixel mask \cite{yun2019cutmix}). $N(\lambda)$ denotes a random variable conditioned on $\lambda$ that demonstrates how we combine the output. We assume that the output $y$ can be one-dimensional data or a matrix; the former means regression or classification task, and the latter means semantic segmentation task.
For the sake of simplicity, we let $y$ be one-dimensional data: $N(\lambda) = \lambda$ and $\bE[M(\lambda)] = \lambda\vec{1}$.

\paragraph{Remark 1.} If the meaning is not ambiguous, then we sometimes omit $\lambda$ (\ie, $M(\lambda)$ to $M$).
For the sake of simplicity, we consider mixing only two samples (\ie, $\msda{z}{i,j}(\lambda, 1-\lambda)$), but
we can similarly extend these analyses to mixing $n$-samples data augmentation \cite{takahashi2019ricap,jeong2021observations,greenewald2021k}. If we combine $n$-samples, the ratio parameter will be a vector in general
(See \cref{appendix::n-mixup}).

\paragraph{Remark 2.} As recent studies \cite{walawalkar2020attentive, kim2020puzzle, kim2021co} have shown, $M(\lambda)$ or $N(\lambda)$ can depend on $(z_i, z_j)$, \eg, by using a saliency map \cite{kim2020puzzle, kim2021co} or the class activation map \cite{walawalkar2020attentive}.
Since the proof techniques for our theoretical analysis are invariant to the choice of $M(\lambda)$ and $N(\lambda)$, our proof techniques also can be applied to the dynamic MSDA methods. For simplicity, we assume that $M$ is a random variable only depending on $\lambda$. In other words, we assume $\cW$ as a random sample space, and $M: \cW \times \Lambda \to \bR^n$ is a measurable function. We left the theoretical analysis of dynamic methods to the future.

Now, we re-write the two most popular MSDA methods, Mixup \cite{zhang2017mixup} and CutMix \cite{yun2019cutmix}, for $i$-th and $j$-th samples with $\lambda$, drawn from $\cD_\lambda$, by using the proposed framework (\cref{eqn::msda}) as follows:
\begin{align}
\label{eqn::msda-mixup}
\begin{split}
\text{\textbf{Mixup:}} \quad \mixup{z}{i,j}(&\lambda, 1-\lambda) ~=~ (\mixup{x}{i,j}(\lambda, 1-\lambda), \mixup{y}{i,j}(\lambda, 1-\lambda)) \qquad \qquad  \qquad \qquad\\
\text{where} \quad &\mixup{x}{i,j}(\lambda, 1-\lambda) ~=~ \lambda x_i + (1- \lambda) x_j \quad \text{and} \\
&\mixup{y}{i,j}(\lambda, 1-\lambda) ~=~ \lambda y_i + (1- \lambda) y_j.
\end{split}
\end{align}
\begin{align}
\label{eqn::msda-cutmix}
\begin{split}
\text{\textbf{CutMix:}} \quad &\cutmix{z}{i,j}(\lambda, 1-\lambda) ~=~ (\cutmix{x}{i,j}(M, 1-M), \cutmix{y}{i,j}(\lambda, 1-\lambda))\\
&\text{where} \quad \cutmix{x}{i,j}(\cutmix{M}{}, 1-\cutmix{M}{}) = \cutmix{M}{} \odot x_i + (1-\cutmix{M}{}) \odot x_j
\\ &\text{and} \qquad \cutmix{y}{i,j}(\lambda, 1-\lambda) = \lambda y_i + (1- \lambda) y_j.
\end{split}
\end{align}
Note that \cref{eqn::msda-mixup} is equivalent to \cref{eqn::msda} by putting $M(\lambda) = \lambda \vec{1}$.
In \cref{eqn::msda-cutmix}, $\cutmix{M}{}$ is a binary mask that indicates the location of the cropped box region with a relative area $\lambda$. 
Similarly, other MSDA variants can be easily formed as \cref{eqn::msda} by introducing new $M(\lambda)$ and $N(\lambda)$.

\paragraph{Notations.}
We define the loss function as $l(\theta, z)$, where $\theta \in \Theta \subseteq \bR^d$. We define $L(\theta) = \bE_{z \sim \cP_{z}} l(\theta,z)$ as the non-augmented population loss and $L_m(\theta) = \frac{1}{m}\sum_{i=1}^m l(\theta, z_i)$ as the empirical loss for the non-augmented population. For a general MSDA, we can define MSDA loss as
\begin{align}
    L^{\text{MSDA}}_m(\theta) &= \bE_{i, j \sim \text{Unif}([m])}{\bE_{\lambda \sim \cD_\lambda} \bE_M l(\theta, \msda{z}{i,j}(\lambda, 1-\lambda))}.\label{eqn::MSDA-original-loss}
\end{align}
Therefore, the Mixup and CutMix losses can be written as
\begin{align}
    L^{\text{mixup}}_m(\theta) & = \frac{1}{m^2}\sum_{i,j =1}^m\bE_{\lambda \sim \cD_\lambda} l(\theta, \mixup{z}{i,j}(\lambda, 1-\lambda)) \label{eqn::mixup-original-loss}
    \\
    L^{\text{cutmix}}_m(\theta) &= \frac{1}{m^2}\sum_{i,j =1}^m\bE_{\lambda \sim \cD_\lambda} \bE_M l(\theta, \cutmix{z}{i,j}(\lambda, 1-\lambda)), \nonumber
\end{align}
where $[m] = \{1,2,\dots, m\}$ and $\cD_\lambda$ is a distribution supported on $[0,1]$
with a conjugate prior.
Throughout this paper, we consider $\cD_\lambda$ as $\text{Beta}(\alpha, \beta)$, a common selection for $\lambda$ in practice. We define $\cD_X$ as the empirical distribution of the training dataset.

\section{A Unified Theoretical Understanding of MSDA}
\label{sec::approxloss}
In this section, we provide a unified theoretical lens of how MSDA works. Specifically, we follow the theoretical results for Mixup provided by Zhang \etal \cite{zhang2020does}, where Zhang \etal have shown that Mixup is equivalent to the summation of the original loss function and a Mixup-originated regularization term. We will give a general approximation form for MSDA using $\lambda \sim \text{Beta}(\alpha, \beta)$. We also show that our analysis can be extended to $n$-sample mixed augmentation 
(See \cref{appendix::n-mixup})

\paragraph{From an MSDA loss to an input gradient and Hessian regularization.}
We first consider the following class of loss functions for a twice differentiable prediction function $f_\theta(x)$ (\eg, a softmax output of a neural network), a twice differentiable function $h$, and target $y$:
$$\cL = \{ l(\theta, z) \,|\, l(\theta, z) = h(f_\theta(x)) - yf_\theta(x) \text{ for a twice differentiable function }h \}.$$ 
This function class $\cL$ includes the loss function induced by Generalized Linear Models (GLMs) and cross-entropy.
Now, we introduce our first theoretical result that induces an MSDA loss (\ie, \cref{eqn::MSDA-original-loss}) can be re-written as the summation of the original loss (the empirical loss for the non-augmented population loss, $L_m(\theta)$) and input gradient-related regularization terms as follows.

\begin{theorem}
\label{thm::MSDA-loss}
Consider a loss function $l \in \cL$. We define $\tilde{D}_\lambda$ as $\frac{\alpha}{\alpha + \beta}\text{Beta}(\alpha + 1, \beta) + \frac{\beta}{\alpha + \beta}\text{Beta}(\beta + 1, \alpha)$. Assume that $\bE_{r_x \sim \cD_X}[r_x] = 0$. Then, we can re-write the general MSDA loss \eqref{eqn::MSDA-original-loss} as
\begin{equation}
\label{eqn::thm1-approxloss}
L^{\text{MSDA}}_m(\theta) = L_m(\theta) + \sum_{i=1}^3 \cR_i^{(\text{MSDA})}(\theta) + \bE_{\lambda \sim \tilde{\cD}(\lambda)}\bE_M [(1-M)^\intercal \varphi(1-M) (1-M)], 
\end{equation}
where $\lim_{a \to 0}\varphi(a)= 0 $,
{\small
\begin{align}
\label{eqn::thm1-Rs}
\begin{split}
    &\cR^{(\text{MSDA})}_1(\theta) = \frac{1}{m} \sum_{i=1}^m( y_i - h'(f_\theta(x_i))) \left( \nabla f_\theta(x_i)^\intercal x_i\right) \bE_{\lambda \sim \tilde{D}_\lambda} (1-\lambda),
    \\
    &\cR^{(\text{MSDA})}_2(\theta) = \frac{1}{2m}\sum_{i=1}^m h''(f_\theta(x_i)) \bE_{\lambda \sim \tilde{D}_\lambda} \fG(\cD_X, x_i, f, M)  ,  
    \\
    &\cR^{(\text{MSDA})}_3(\theta) = \frac{1}{2m}\sum_{i=1}^m(h'(f_\theta(x_i))-y_i) \bE_{\lambda \sim \tilde{D}_\lambda} \fH(\cD_X, x_i, f, M),
\end{split}
\end{align}
and 
\begin{align}
\label{eqn::thm1-GH}
\begin{split}
    \fG(\cD_X, &x_i, f, M) = \bE_M (1-M)^\intercal \bE_{r_x \sim \cD_X} \left(\nabla f(x_i) \odot (r_x- x_i) \left(\nabla f(x_i) \odot (r_x- x_i)\right)^\intercal\right)(1-M)
    \\
    &= \sum_{j,k \in \text{coord}}a_{jk} \partial_j f_\theta(x_i)\partial_k f_\theta(x_i)\left(\bE_{r_x \sim \cD_X}[r_{xj}r_{xk}] + x_{ij}x_{ik}\right),
    \\
    \fH(\cD_X, &x_i, f, M) = \bE_{r_x \sim \cD_X}\bE_M (1-M)^\intercal  \left(\nabla^2f_\theta(x_i) \odot \left((r_x- x_i)(r_x- x_i)^\intercal\right)\right)(1-M)
    \\
    &= \sum_{j,k \in \text{coord}} a_{jk} \left(\bE_{r_x \sim \cD_X}[r_{xj}r_{xk}\partial^2_{jk}f_\theta(x_i)] + x_{ij}x_{ik}\partial^2_{jk}f_\theta(x_i)\right),
\end{split}
\end{align}
where

\begin{equation}
\label{eqn::regularization_coefficient}
    a_{jk} := \bE_M[(1-M_j)(1-M_k)].
\end{equation}
}
\end{theorem}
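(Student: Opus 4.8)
The plan is to reduce the general MSDA loss to a single-anchor form and then Taylor-expand the loss around each anchor point $x_i$. Concretely, I would first establish that
\begin{equation}
L^{\text{MSDA}}_m(\theta) = \frac{1}{m}\sum_{i=1}^m \bE_{\lambda\sim\tilde{D}_\lambda}\bE_M\bE_{r_x\sim\cD_X}\, l\big(\theta,\,(M\odot x_i + (1-M)\odot r_x,\; y_i)\big),
\end{equation}
i.e. the mixed label collapses to the anchor label $y_i$ while the mixing weight is reparametrized from $\text{Beta}(\alpha,\beta)$ to $\tilde{D}_\lambda$, and the other sample $x_j$ becomes $r_x\sim\cD_X$ (legitimate since $\cD_X$ is the empirical distribution, so $\tfrac1m\sum_j = \bE_{r_x\sim\cD_X}$). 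Writing the perturbation as $\delta = (1-M)\odot(r_x - x_i)$, each summand is $l(\theta,(x_i+\delta,y_i))$ with $\delta\to 0$ as $M\to\vec{1}$, and a second-order expansion then separates $L_m(\theta)$, the three regularizers, and the remainder.

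The reparametrization is the heart of the argument and I expect it to be the main obstacle. Because $l\in\cL$ is affine in $y$, I would split the loss into the label-free part $h(f_\theta(\msda{x}{i,j}))$ and the label part $-\tilde{y}\,f_\theta(\msda{x}{i,j})$ with $\tilde{y}=\lambda y_i + (1-\lambda)y_j$. For both parts I would use the elementary Beta identities $\bE_{\text{Beta}(\alpha,\beta)}[\lambda\,g(\lambda)] = \tfrac{\alpha}{\alpha+\beta}\bE_{\text{Beta}(\alpha+1,\beta)}[g]$ and $\bE_{\text{Beta}(\alpha,\beta)}[(1-\lambda)\,g(\lambda)]=\tfrac{\beta}{\alpha+\beta}\bE_{\text{Beta}(\alpha,\beta+1)}[g]$, combined with the symmetrization $i\leftrightarrow j,\ \lambda\leftrightarrow 1-\lambda$ over the uniform pair $(i,j)$. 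For the label part, applying the first identity to the $\lambda y_i$ term and the second to the $(1-\lambda)y_j$ term (then relabeling $j\leftrightarrow i$, which turns $\text{Beta}(\alpha,\beta+1)$ into $\text{Beta}(\beta+1,\alpha)$) makes both contributions carry the anchor label $y_i$ and produces exactly the mixture $\frac{\alpha}{\alpha+\beta}\text{Beta}(\alpha+1,\beta)+\frac{\beta}{\alpha+\beta}\text{Beta}(\beta+1,\alpha)=\tilde{D}_\lambda$; decomposing $1=\lambda+(1-\lambda)$ in the label-free part yields the same $\tilde{D}_\lambda$. The one delicate point is the symmetrization of $\msda{x}{i,j}$ for a general mask: the swap $i\leftrightarrow j$ sends $M\odot x_i+(1-M)\odot x_j$ to $(1-M)\odot x_i + M\odot x_j$, so I must invoke the natural symmetry that the law of $M$ given $\lambda$ equals that of $\vec{1}-M$ given $1-\lambda$ (true for Mixup with $M=\lambda\vec1$ and for CutMix, whose complementary box has area $1-\lambda$), which keeps the mixed input invariant under the joint swap.

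With the single-anchor form in hand, I would expand $f_\theta(x_i+\delta)=f_\theta(x_i)+\nabla f_\theta(x_i)^\intercal\delta+\tfrac12\delta^\intercal\nabla^2 f_\theta(x_i)\delta + o(\norm{\delta}^2)$ and $h$ around $f_\theta(x_i)$, and collect by order. The zeroth-order term is $h(f_\theta(x_i))-y_if_\theta(x_i)=l(\theta,z_i)$, whose average is $L_m(\theta)$. The first-order term $(h'(f_\theta(x_i))-y_i)\nabla f_\theta(x_i)^\intercal\bE[\delta]$ is evaluated via independence of $M$ and $r_x$, together with $\bE_M[\vec1-M]=(1-\lambda)\vec1$ and $\bE_{r_x}[r_x]=0$, giving $\bE[\delta_j]=-(1-\lambda)x_{ij}$ and hence $\cR^{(\text{MSDA})}_1(\theta)$. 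The two quadratic contributions $\tfrac12 h''(f_\theta(x_i))(\nabla f_\theta(x_i)^\intercal\delta)^2$ and $\tfrac12(h'(f_\theta(x_i))-y_i)\,\delta^\intercal\nabla^2 f_\theta(x_i)\delta$ require the second moment $\bE[\delta_j\delta_k]=a_{jk}(\bE_{r_x}[r_{xj}r_{xk}]+x_{ij}x_{ik})$, which again uses $M\perp r_x$, the definition $a_{jk}=\bE_M[(1-M_j)(1-M_k)]$, and $\bE_{r_x}[r_x]=0$; these reproduce $\fG$ and $\fH$, and thus $\cR^{(\text{MSDA})}_2$ and $\cR^{(\text{MSDA})}_3$ respectively.

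Finally, the Taylor remainder is gathered into the last term: since $\delta=(1-M)\odot(r_x-x_i)$ and $r_x-x_i$ ranges over a finite empirical set, the higher-order error is controlled by $\norm{1-M}$, so it can be written as the quadratic form $(1-M)^\intercal\varphi(1-M)(1-M)$ with $\varphi(a)\to 0$ as $a\to0$, reflecting that the perturbation vanishes as $M\to\vec1$. Beyond the mask-symmetry subtlety already flagged, the only remaining technical point is justifying the interchange of the expectations with the Taylor remainder and the uniform smallness of the error, which follows from twice-differentiability of $h$ and $f_\theta$ together with the finiteness of $\cD_X$.
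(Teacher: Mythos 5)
Your proposal is correct and follows essentially the same route as the paper's proof: reduction to the single-anchor form $\frac{1}{m}\sum_i \bE_{\lambda\sim\tilde{D}_\lambda}\bE_M\bE_{r_x}\, l(\theta,(M\odot x_i+(1-M)\odot r_x,y_i))$ via the Beta identities (which are exactly the computational content of the Beta--Binomial conjugacy the paper invokes), followed by a quadratic Taylor expansion in $N=1-M$ with the first and second moments of $\delta=(1-M)\odot(r_x-x_i)$ evaluated using $\bE_{r_x}[r_x]=0$ and the independence of $M$ and $r_x$. The only notable difference is that you make explicit the mask-symmetry condition (the law of $\vec{1}-M$ given $\lambda$ equals the law of $M$ given $1-\lambda$) needed to fold the $y_j$-labelled term back into anchor form, a step the paper performs implicitly.
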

\begin{proof}[Proof outline of Theorem~\ref{thm::MSDA-loss}]
Using the definition of $\tilde{z}_{ij}$ and using the fact that the Binomial distribution and Beta distribution are in the conjugate, we can reformulate $L_m^{(MSDA)}$. \revision{In the process of reformulating $L_m^{(MSDA)}$, we should define $\tilde{\cD}_\lambda$.} Then, we can make a quadratic Taylor approximation of the loss term. \revision{Here, $\bE_{r_x}[r_x]=0$ is used for not only the simplicity of the results, but also for the fact that using normalization in the dataset.} Details can be found in
\cref{appendix::thm1pf}.
\revision{We also show that \cref{thm::MSDA-loss} can be extended to $n$-sample MSDA methods (\cref{appendix::n-mixup}). In this case, the combinatorial terms in quadratic multivariate Taylor approximation also come out.}
\end{proof}
\revision{
\textit{How is our approximation accurate?}
We call $\tilde{L}_m^{MSDA}(\theta):= L_m(\theta) + \sum_{i=1}^3 \cR_i^{(MSDA)}$ as \textit{the approximate MSDA loss}.
Here, we empirically demonstrate that our quadratic approximation is almost accurate by following numerical validations in \cite{wager2013dropout, carratino2020mixup, zhang2020does}.
Specifically, we train logistic regression models on two-moons dataset \cite{buitinck2013api} in two ways: (1) by using the original MSDA loss function (2) by using our approximated loss function. We employ two MSDA examples as below.
\begin{itemize}
    \item The original Mixup \ie, $\lambda \sim \text{Beta}(1,1)$ and $M = \lambda \vec{1}$
    \item Variants of CutMix \ie, $\lambda \sim \text{Beta}(1,1)$ and $M = (m_1, m_2)$ such that $m_i \sim \text{Bernoulli}(\lambda)$. 
\end{itemize} 
\cref{fig:approx} displays the approximate loss function and the original loss function. According to empirical findings, we can conclude that the original MSDA loss is fairly close to the approximate MSDA loss.
}
\begin{figure}
\centering
\begin{subfigure}{0.45\textwidth}
    \includegraphics[width=\textwidth]{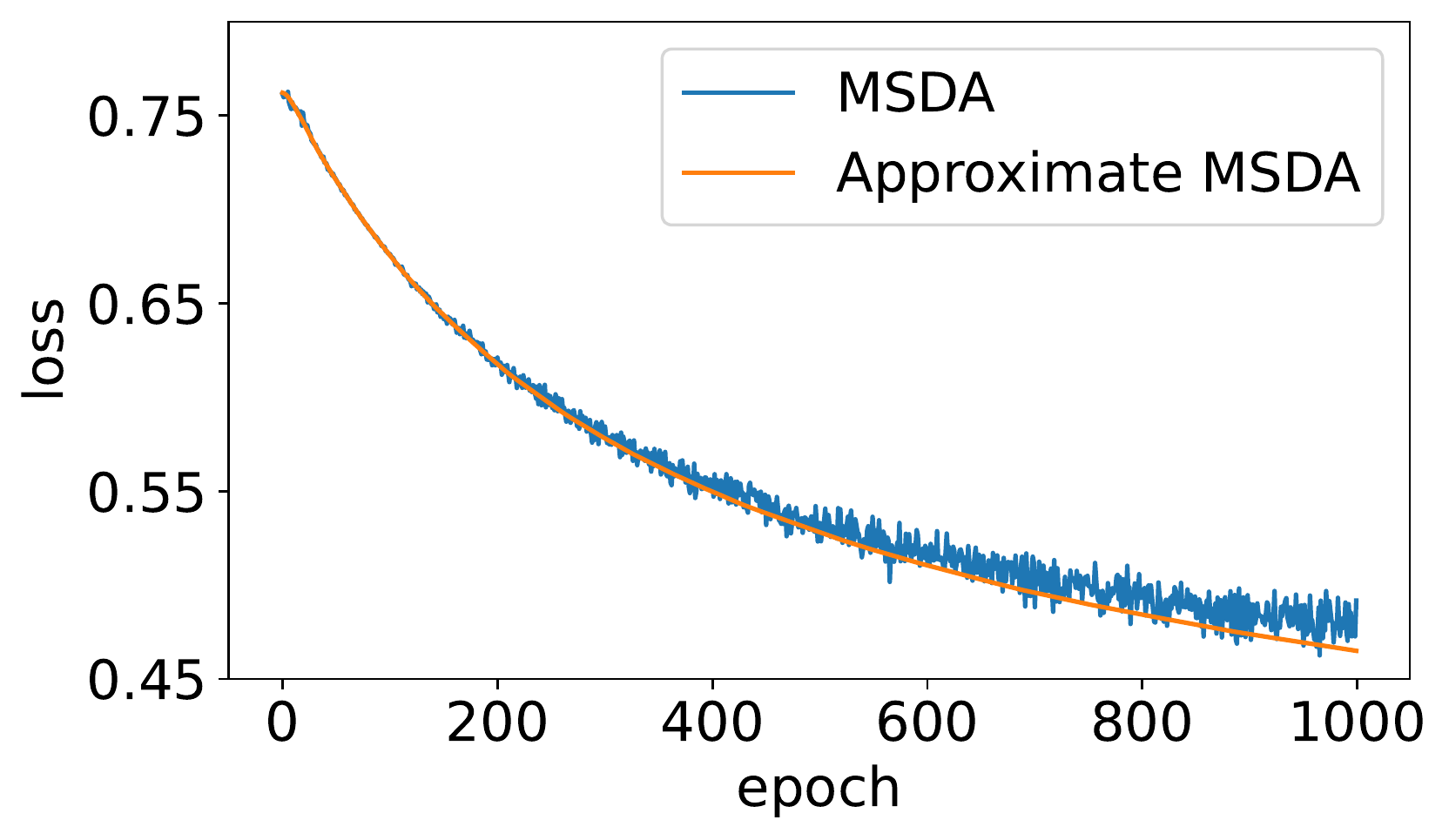}
    \caption{Mixup}
\end{subfigure}
\begin{subfigure}{0.45\textwidth}
    \includegraphics[width=\textwidth]{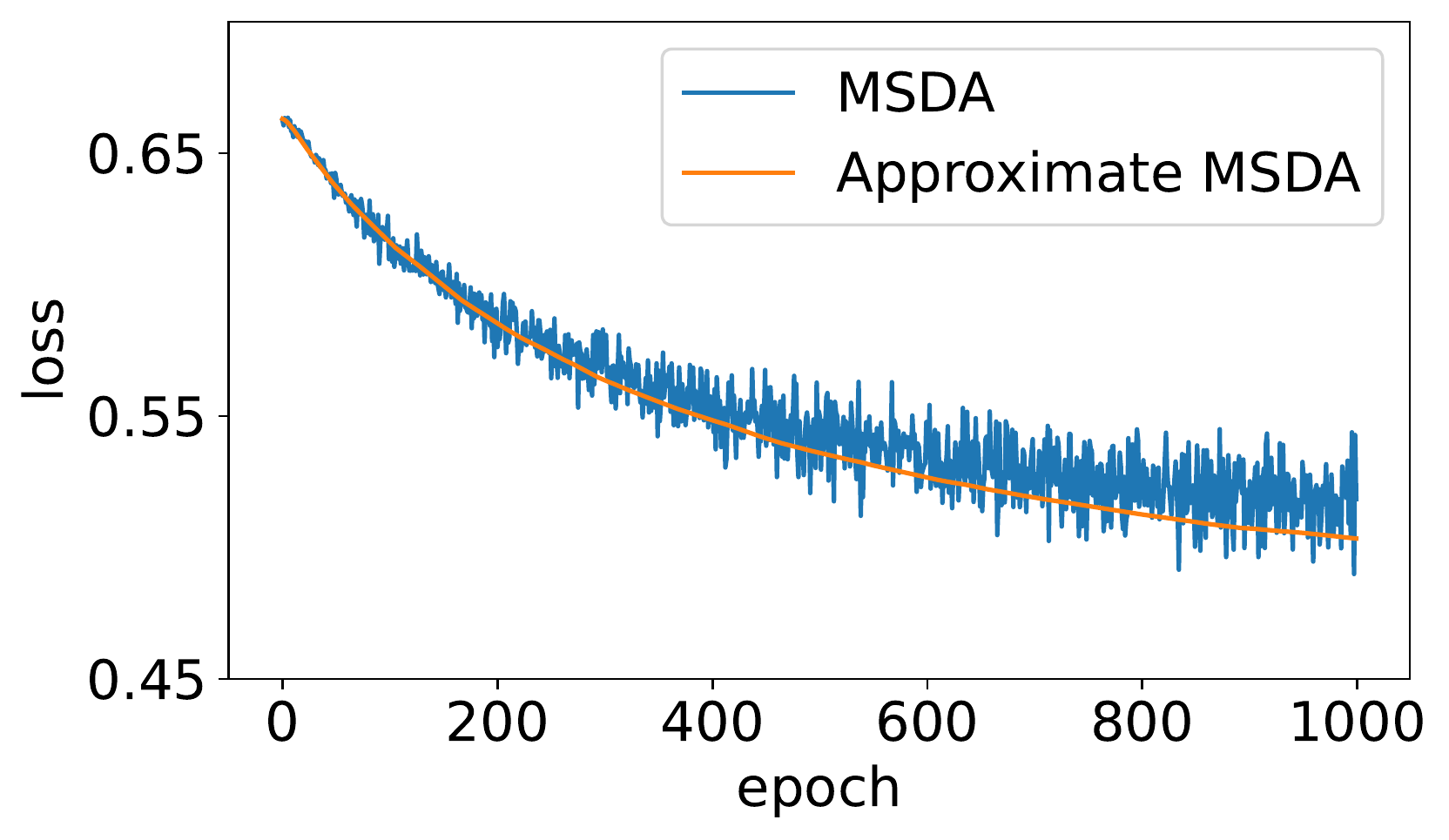}
    \caption{CutMix}
\end{subfigure}
\caption{Comparison of the original MSDA loss with the approximate MSDA loss function.}
\label{fig:approx}
\end{figure}

\textit{What makes the difference between various MSDA methods?}
In the theorem, as we define $E_M(1-M) = 1-\lambda$, $\cR_1^{(\text{MSDA})}$ is the same for every MSDA method. Namely, the difference between MSDA methods originated from $\cR_2^{(\text{MSDA})}$ and $\cR_3^{(\text{MSDA})}$.
Note that if we set $M$ = $\lambda \vec{1}$, \cref{thm::MSDA-loss} indicates a Mixup loss \eqref{eqn::mixup-original-loss}, and the result is consistent with Zhang \etal \cite{zhang2020does}.
In \cref{eqn::thm1-Rs} and \cref{eqn::thm1-GH}, we observe that $\cR_2$ is related to the input gradient $\nabla f_\theta(x_i)$ and $\cR_3$ is related to input Hessian $\nabla^2 f_\theta(x_i)$ with mask-dependent coefficients $a_{jk}$ \eqref{eqn::regularization_coefficient}.
In other words, different design choice of MSDA (\eg, how to design $M$) will lead to different magnitudes of regularization on the input gradients and Hessians.
Because the values of input gradients and Hessians are varying by datasets, tasks and model architecture choices, we can conclude that the optimal choice of $M$ is dependent on the applications.
We also describe how the other MSDA methods (\eg, dynamic MSDAs \cite{uddin2020saliencymix, kim2020puzzle, kim2021co}) can be interpreted through the lens of our unified analysis in \cref{appendix::other-msdas}.

In addition, to show that MSDA behaves as a regularization on input gradients and Hessians for any desired $a_{jk}$, we also show that there always exists a MSDA design choice $M$ for any desired regularization coefficient matrix $A(\lambda):= (a_{jk}(\lambda))$ with the regular conditions.
\begin{theorem}
\label{thm::mask}
For the given $\lambda$, we assume $A(\lambda) - (1-\lambda)^2\vec{1}{\vec{1}^\intercal}$ is a nonnegative definite matrix. Then we can construct a real-valued mask $M$ that $\bE(1-M_j)(1-M_k) = a_{jk}$ for all $j,k$.
\end{theorem}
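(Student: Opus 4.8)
The plan is to recognize that, after a change of variables, the statement is nothing more than the classical fact that any prescribed mean together with any positive-semidefinite covariance matrix can be realized by some random vector. First I would set $W := \vec{1} - M$, so that the target condition $\bE[(1-M_j)(1-M_k)] = a_{jk}$ becomes $\bE[W_j W_k] = a_{jk}$, i.e.\ the second-moment matrix of $W$ equals $A(\lambda)$. Since the MSDA framework imposes $\bE[M] = \lambda \vec{1}$, we must also have $\bE[W] = (1-\lambda)\vec{1}$, and the covariance of $W$ is then forced to be
\begin{equation}
\Sigma := \bE[WW^\intercal] - \bE[W]\bE[W]^\intercal = A(\lambda) - (1-\lambda)^2 \vec{1}\vec{1}^\intercal,
\end{equation}
which is exactly the matrix assumed nonnegative definite in the hypothesis. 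So the whole problem reduces to producing a random vector $W$ with mean $(1-\lambda)\vec{1}$ and covariance $\Sigma \succeq 0$.

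Second, I would carry out the standard construction. Using the spectral decomposition $\Sigma = U D U^\intercal$ with $D = \diag(\mu_1,\dots,\mu_n)$ and $\mu_l \ge 0$ (available precisely because $\Sigma$ is symmetric and nonnegative definite), set $\Sigma^{1/2} = U D^{1/2} U^\intercal$. Let $Z$ be any centered random vector with identity covariance (e.g.\ a standard Gaussian $Z \sim \cN(0, I)$, or coordinatewise i.i.d.\ Rademacher variables), and define
\begin{equation}
W = (1-\lambda)\vec{1} + \Sigma^{1/2} Z, \qquad M = \vec{1} - W.
\end{equation}
A direct moment computation then closes the argument: $\bE[W] = (1-\lambda)\vec{1}$, $\mathrm{Cov}(W) = \Sigma^{1/2} (\Sigma^{1/2})^\intercal = \Sigma$, and hence $\bE[WW^\intercal] = \Sigma + (1-\lambda)^2 \vec{1}\vec{1}^\intercal = A(\lambda)$, which is the desired $\bE[(1-M_j)(1-M_k)] = a_{jk}$.

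Finally, I would observe that the nonnegative-definiteness assumption is not only sufficient but necessary: if any such mask existed, then $\Sigma = \mathrm{Cov}(\vec{1}-M)$ would automatically be a covariance matrix, hence positive semidefinite, so the hypothesis is tight. The only genuine subtlety, and the point I would flag as the real obstacle, is the meaning of ``mask.'' The construction above produces a \emph{real-valued} $M$ with possibly unbounded entries, which matches the theorem's phrasing. If one additionally insisted on $M \in [0,1]^n$ (a genuine interpolation/cut mask), the Gaussian realization would no longer be admissible and one would instead need a bounded-support coupling matching the prescribed first two moments; this is a strictly stronger requirement and would be the hard part, but it lies outside the scope of the stated theorem.
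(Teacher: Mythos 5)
Your proof is correct and is essentially the paper's own argument: the paper's one-line proof takes $M = 1-\lambda + (A(\lambda) - (1-\lambda)^2\vec{1}\vec{1}^\intercal)^{1/2}Z$ with $Z$ Gaussian, which is your construction up to a sign/offset bookkeeping slip (it is $1-M$ that should equal $(1-\lambda)\vec{1} + \Sigma^{1/2}Z$), and your version spells out the moment computation cleanly. Your closing remark about unbounded mask values is also made in the paper immediately after the theorem.
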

\begin{proof}
Setting $M = 1-\lambda + (A(\lambda) - (1-\lambda)^2\vec{1}{\vec{1}^\intercal})^{1/2} Z$ where $Z$ is normal distribution, the theorem holds.
\end{proof}
Note that, in the proof, $M$ values are not bounded where typically we choose $0\leq M_i \leq 1$.
In other words, the theorem holds if we allow mask values out of $[0, 1]$.
To investigate the potentiality of unbounded mask, we explore Mixup with unbounded masks in Figure~\ref{fig:negative-M}. Although, allowing negative values to $M$ can be beneficial, we leave a new mask design with unbounded values as a future work.

\begin{figure}[t]
\centering
\includegraphics[height=2cm]{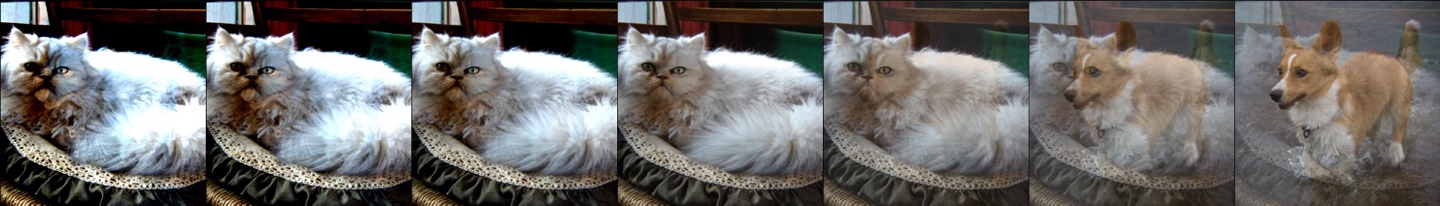}
\caption{The negative $M$ value results. the image is $\lambda * \text{dog} + (1-\lambda) * \text{cat}$ where $-0.75 \leq \lambda \leq 0.75$}
\label{fig:negative-M}
\vspace{-1em}
\end{figure}

Unfortunately, as the target loss function \eqref{eqn::thm1-approxloss} is mingled with the choice of mask $M$, data sample $x_i$, and pixel-level function gradient, the optimal choice of mixing strategy $M$ is not achievable in the closed-form solution. Instead, \cref{thm::MSDA-loss} implies that there is no absolute superiority between the design choice of MSDA, but it depends on datasets and the target tasks, as our empirical observation is consistent with the theoretical interpretation.
In \cref{sec::meanofmask}, we will provide more examples of how different $M$ affects the actual coefficients $a_{jk}$ and the input gradients for better understanding.

Using the regularization term $\cR_2^{(\text{MSDA})}$ \eqref{eqn::thm1-Rs}, we can also provide a theoretical connection between MSDA methods and the notion of flatness where a more flat solution leads to better generalization in applications \cite{keskar2016largebatch,garipov2018fge,izmailov2018swa,foret2020sharpness,cha2021swad}.
Inspired by Ma \etal \cite{ma2021linear}, we split the parameters by $\theta= (\theta_1, \theta_2)$, and then the neural network can be represented by the form $f_\theta(x) = \tilde{f}_{\theta_2}(\theta_1 x)$. Therefore, we have
\begin{align*}
    \nabla_{\theta_1} \tilde{f}_{\theta_2}(\theta_1 x) = \fracpartial{f}{(\theta_1 x)} x^\intercal,  \qquad \nabla_{x} \tilde{f}_{\theta_2}(\theta_1 x) = \theta_1^\intercal \fracpartial{f}{(\theta_1 x)},
\end{align*}
where $\fracpartial{f}{(\theta_1 x)}$ is the partial derivative of the first layer. Now, we have
\begin{align*}
\label{eqn::parameter-smootheness}
\begin{split}
    ((1-M)\odot \nabla f(x))^\intercal x &= \text{tr}(x ((1-M)\odot \nabla f(x))^\intercal) = \text{tr}\left(x \left(\theta_1^\intercal \fracpartial{f}{\theta_1 x} \odot (1-M)\right)^\intercal\right)
    \\
    &= \text{tr}\left(x \left(\left( \fracpartial{f}{\theta_1 x}\right)^\intercal \theta_1 \diag(1-M) \right)\right) 
    \\
    &= \text{tr}\left(\left(\nabla_{\theta_1}\tilde{f}_{\theta_2}(\theta_1 x)\right)^\intercal \theta_1 \diag(1-M)\right).
\end{split}
\end{align*}
Note that the terms in $\fG$ \eqref{eqn::thm1-GH} can be re-written as follows
\begin{align*}
    \sum_{j,k \in \text{coord}} \bE_M[(1-M_j)(1-M_k)] \partial_j f_\theta(x_i)\partial_k f_\theta(x_i) (x_{ij} x_{ik}) = \bE \left[ (((1-M)\odot \nabla f(x))^\intercal x)^2 \right].
\end{align*}
In other words, by minimizing the regularization term $\cR_2^{(\text{MSDA})}$, $\int (\theta_1^\intercal \nabla_{\theta_1} \tilde{f}_{\theta_2})^2$, \ie, the regularization effect of flatness at the interpolation solution can be minimized in a sample-wise weighted manner.
Therefore, the regularization term $\cR_2^{(\text{MSDA})}$ also can be interpreted as a regularization of the first layer parameters and their partial derivative of $f$.

\paragraph{Robustness and generalization properties of MSDA.}
As a number of studies \cite{moosavi2019robustness,mustafa2020input, ma2021linear} have shown that regularizing input gradient and Hessian will give better robustness and generalization to the target network $\theta$, it can be shown that MSDA also has adversarial robustness properties and generalization properties based on Theorem~\ref{thm::MSDA-loss}.
The full statement of Theorem~\ref{thm::MSDA-robustness} and Theorem~\ref{thm::MSDA-generalization} can be found in 
\cref{appendix::thm2pf} and \cref{appendix::thm3pf}, respectively.

\begin{theorem}[Informal]
\label{thm::MSDA-robustness}
With the logistic loss function under the ReLU network, the approximate loss function of MSDA is greater than the adversarial loss with the $\ell_2$ attack of size $\epsilon\sqrt{d}$. 
\end{theorem}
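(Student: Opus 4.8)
The plan is to specialize Theorem~\ref{thm::MSDA-loss} to the logistic loss $l(\theta,z)=h(f_\theta(x))-yf_\theta(x)$ with $h(s)=\log(1+e^{s})$ and $f_\theta$ a ReLU network, and to show that the approximate loss $\tilde L_m^{MSDA}(\theta)=L_m(\theta)+\sum_{i=1}^{3}\cR_i^{(\text{MSDA})}(\theta)$ upper bounds the adversarial loss
\[
L^{\mathrm{adv}}_m(\theta)=\frac1m\sum_{i=1}^m\max_{\|\xi\|_2\le\epsilon\sqrt d}\,l(\theta,x_i+\xi,y_i).
\]
First I would exploit the ReLU structure: because $f_\theta$ is piecewise linear, $\nabla^2 f_\theta(x)=\vec{0}$ for almost every $x$, so $\fH\equiv 0$ and the Hessian regularizer $\cR_3^{(\text{MSDA})}$ vanishes. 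This collapses the approximate loss to $L_m+\cR_1^{(\text{MSDA})}+\cR_2^{(\text{MSDA})}$, a standard loss plus a pure input-gradient penalty. I would also record that a bias-free ReLU network is positively homogeneous of degree one, so Euler's identity gives $\nabla f_\theta(x_i)^\intercal x_i=f_\theta(x_i)$, which simplifies the factor $\nabla f_\theta(x_i)^\intercal x_i$ appearing in $\cR_1^{(\text{MSDA})}$.

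Next I would localize the adversarial loss. Inside the linear region containing $x_i$ we have $l(\theta,x_i+\xi,y_i)=\phi_i(g_i^\intercal\xi)$ with $g_i=\nabla f_\theta(x_i)$ and $\phi_i(t)=h(f_\theta(x_i)+t)-y_i(f_\theta(x_i)+t)$, so the inner maximization reduces to the one-dimensional problem $\max_{|t|\le\epsilon\sqrt d\,\|g_i\|}\phi_i(t)$. Since $\phi_i$ is convex the maximum is attained at an endpoint, and using the uniform bound $h''=\sigma(1-\sigma)\le\tfrac14$ a second-order Taylor estimate gives
\[
\max_{|t|\le\epsilon\sqrt d\,\|g_i\|}\phi_i(t)\ \le\ l(\theta,x_i,y_i)+\epsilon\sqrt d\,\|g_i\|\,\bigl|h'(f_\theta(x_i))-y_i\bigr|+\tfrac18\,\epsilon^2 d\,\|g_i\|^2 .
\]

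The decisive step is to show that, after averaging over $i$, the regularizers produced by Theorem~\ref{thm::MSDA-loss} dominate the first- and second-order adversarial terms on the right. Here I would feed the data statistics into $\fG$: under the normalization $\bE_{r_x}[r_x]=\vec{0}$ with the per-coordinate covariance calibrated to $\epsilon$, the data part $\sum_{j,k}a_{jk}\,\partial_j f_\theta(x_i)\partial_k f_\theta(x_i)\,\bE_{r_x}[r_{xj}r_{xk}]$ of $\fG$ reproduces a multiple of $\epsilon^2 d\,\|g_i\|^2$, so that the curvature weight $h''(f_\theta(x_i))$ inside $\cR_2^{(\text{MSDA})}$ controls the quadratic adversarial term, while the first-order adversarial term is absorbed into $\cR_1^{(\text{MSDA})}$ through the homogeneity identity $\nabla f_\theta(x_i)^\intercal x_i=f_\theta(x_i)$; this calibration is exactly what fixes the admissible radius at $\epsilon\sqrt d$.

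I expect this final matching to be the main obstacle, for two reasons. First, $\cR_1^{(\text{MSDA})}$ is not sign-definite once $f_\theta(x_i)$ and $y_i$ are misaligned, whereas the adversarial first-order term $\epsilon\sqrt d\,\|g_i\|\,|h'(f_\theta(x_i))-y_i|$ is always nonnegative, so the domination has to be argued in the low-loss regime where the margin $|f_\theta(x_i)|$ controls $\epsilon\sqrt d\,\|g_i\|$. Second, the constant is tight: the bound $h''\le\tfrac14$ and the precise mask coefficients $a_{jk}$ in \eqref{eqn::regularization_coefficient} must be tracked rather than discarded in order to land on the stated radius, which is where the argument genuinely extends the Mixup case (whose coefficients are uniform) to general MSDA. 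I would finish by combining the per-sample inequalities into $\tilde L_m^{MSDA}(\theta)\ge L^{\mathrm{adv}}_m(\theta)$, and by justifying the almost-everywhere vanishing of $\nabla^2 f_\theta$ so that the measure-zero nondifferentiability set of the ReLU network does not corrupt the expectations in $\cR_2^{(\text{MSDA})}$ and $\cR_3^{(\text{MSDA})}$.
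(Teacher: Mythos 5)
Your skeleton is the same as the paper's: $\cR_3^{(\text{MSDA})}$ vanishes because $\nabla^2 f_\theta=0$ for a ReLU network, the homogeneity identity $\nabla f_\theta(x_i)^\intercal x_i=f_\theta(x_i)$ feeds $\cR_1^{(\text{MSDA})}$, the first-order adversarial term is absorbed by $\cR_1^{(\text{MSDA})}$ (the sign issue you flag is handled in the paper simply by restricting to $\theta$ with $(y_i-g(f_\theta(x_i)))f_\theta(x_i)\ge 0$), and the quadratic term is absorbed by $\cR_2^{(\text{MSDA})}$. However, two of your steps would fail as written. First, the localization of the inner maximization to a single linear region is invalid: an $\ell_2$ ball of radius $\epsilon\sqrt d$ around $x_i$ generically crosses ReLU activation boundaries, so $l(\theta,(x_i+\xi,y_i))=\phi_i(g_i^\intercal\xi)$ does not hold over the whole ball and the problem does not reduce to one dimension. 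The paper never bounds the exact adversarial loss; its formal statement compares $\tilde L_m^{(\text{MSDA})}$ with the \emph{second-order Taylor approximation} of the adversarial loss, $\tilde l_{\text{adv}}(\eta,z)=l(\theta,z)+\eta\,|g(f_\theta(x))-y|\,\norm{\nabla f_\theta(x)}_2+\tfrac{\eta^2 d}{2}|h''(f_\theta(x))|\,\norm{\nabla f_\theta(x)}_2^2$, imported as a Fact from Zhang \etal, and that surrogate is what the theorem actually dominates.

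Second, replacing $h''$ by the uniform bound $\tfrac14$ on the adversarial side destroys the matching: the resulting term $\tfrac18\epsilon^2 d\,\norm{g_i}^2$ would have to be dominated by $\cR_2^{(\text{MSDA})}$, whose weight is $h''(f_\theta(x_i))=g(f_\theta(x_i))(1-g(f_\theta(x_i)))$ and tends to $0$ for confident predictions, so no fixed $\epsilon>0$ survives. The paper keeps $|h''(f_\theta(x))|$ on both sides so the curvature cancels, and then reads the admissible radius off the remaining geometric factors: it lower-bounds $\fG$ by \emph{discarding} the positive-semidefinite covariance part $\bE_{r_x}[r_{xj}r_{xk}]$ (rather than calibrating it to $\epsilon$, which you cannot do since the data distribution is given), keeps only the $x_ix_i^\intercal$ part, and sets $\epsilon_{\text{cut}}=c_x\min\bigl(\min_i|\cos(\nabla f_\theta(x_i),x_i)|\,\bE_{\lambda\sim\tilde\cD_\lambda}[1-\lambda],\ \min_i\bE_{\lambda,M}\,s(M,x_i)\bigr)$ with $s(M,x_i)$ the mask-weighted alignment between $\nabla f_\theta(x_i)$ and $x_i$. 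In other words, $\epsilon$ is an output of the argument determined by the model, data norm bound $\norm{x_i}_2\le c_x\sqrt d$, and mask statistics, not an input to which the data covariance is calibrated; fixing these two points essentially lands you on the paper's proof.
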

\revision{\begin{proof}[Proof outline of \cref{thm::MSDA-robustness}]
Defining adversarial loss function and using second order taylor expansion, we can prove that adversarial loss is less than MSDA loss.
\end{proof}}
\begin{theorem}[Informal]
\label{thm::MSDA-generalization}
Under the GLM model and the regular conditions, and if we use MSDA in training, we have
\begin{align*}
    L(\theta) \leq \tilde{L}_m^{(\text{MSDA})}(\theta) + \sqrt{\frac{\cO\left(\log(1/\delta)\right)}{n}}
\end{align*}
with probability at least $1-\delta$. This also holds for the MSE loss and a feature-level MSDA. 
\end{theorem}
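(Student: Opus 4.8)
The plan is to derive the bound from standard uniform convergence, using the MSDA regularizer produced by \cref{thm::MSDA-loss} to control the effective complexity of the hypothesis class. Writing $\tilde{L}_m^{(\text{MSDA})}(\theta) = L_m(\theta) + \sum_{i=1}^{3}\cR_i^{(\text{MSDA})}(\theta)$, the target inequality $L(\theta)\le \tilde{L}_m^{(\text{MSDA})}(\theta) + \sqrt{\cO(\log(1/\delta))/n}$ will follow once I show that the generalization gap $L(\theta)-L_m(\theta)$ is dominated by the regularization terms plus a $\sqrt{\log(1/\delta)/m}$ concentration tail (with $n=m$ the sample size). So the first step is the trivial decomposition $L(\theta) = L_m(\theta) + (L(\theta)-L_m(\theta))$, after which I bound the second term uniformly over the parameter set permitted by the regular conditions.

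Second, I would run the standard Rademacher-complexity / McDiarmid machinery on the loss class $\cG = \{ z\mapsto l(\theta,z): \theta\in\Theta\}$. Under the regular conditions (bounded features $\|x\|\le R$, bounded $\|\theta\|$, and $h$ with bounded derivative so that $l\in\cL$ is Lipschitz in $f_\theta$), a one-sided uniform bound gives, with probability at least $1-\delta$, $\sup_{\theta}\big(L(\theta)-L_m(\theta)\big)\le 2\,\mathfrak{R}_m(\cG)+\sqrt{\log(1/\delta)/(2m)}$. By Talagrand contraction the loss-class complexity reduces to that of the linear class $\{x\mapsto \theta^\intercal x\}$, using that for a GLM $f_\theta(x)=\theta^\intercal x$.

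Third — and this is where the MSDA structure genuinely enters — I would make the complexity term data-dependent through $\cR_2^{(\text{MSDA})}$. For a GLM we have $\nabla f_\theta(x_i)=\theta$, so by \cref{eqn::thm1-GH} the quantity $\cR_2^{(\text{MSDA})}(\theta)$ is, up to the $h''$ weighting and the mask coefficients $a_{jk}$, a quadratic form $\theta^\intercal \tilde{\Sigma}\,\theta$ in the parameters, where $\tilde\Sigma$ aggregates the second moments $\bE_{r_x\sim\cD_X}[r_{xj}r_{xk}]+x_{ij}x_{ik}$. Bounding $\cR_2^{(\text{MSDA})}(\theta)\le \gamma$ therefore confines $\theta$ to a $\tilde\Sigma$-weighted norm ball, whose induced linear class has Rademacher complexity $\mathfrak{R}_m=\cO\big(\sqrt{\gamma\,\mathrm{tr}(\tilde\Sigma)/m}\big)$. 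Substituting back and folding the resulting $\cO(\cdot)$-constants, together with $\cR_1^{(\text{MSDA})}$ and $\cR_3^{(\text{MSDA})}$, into the right-hand side converts the crude complexity term into the regularizer value and yields $L(\theta)\le \tilde{L}_m^{(\text{MSDA})}(\theta)+\sqrt{\cO(\log(1/\delta))/m}$.

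Finally, I would verify the two claimed extensions. For MSE the loss is still in $\cL$ with $h(f)=\tfrac12 f^2$, so $h''\equiv 1\ge 0$ and the quadratic-form identification of $\cR_2^{(\text{MSDA})}$ goes through verbatim; for feature-level MSDA I would freeze the feature map $\phi$ and apply the entire argument to the last-layer parameters with $x$ replaced by $\phi(x)$, so that $\nabla f_\theta(\phi(x))=\theta$ again. The main obstacle I anticipate is the sign-indefinite Hessian term $\cR_3^{(\text{MSDA})}$: because its prefactor $h'(f_\theta(x_i))-y_i$ need not be nonnegative, I cannot simply drop it, and I expect to bound $|\cR_3^{(\text{MSDA})}|$ via the bounded-curvature regular conditions and absorb it into the hidden $\cO$-constant rather than treating it as a clean regularizer. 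Relatedly, making the data-dependent complexity bound genuinely tighter than the vanilla linear bound (so that MSDA actually \emph{helps}) requires showing the $\tilde\Sigma$-weighted ball is strictly smaller than the unconstrained one, which is the delicate, assumption-dependent part of the argument.
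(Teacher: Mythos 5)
Your proposal follows essentially the same route as the paper's proof in the appendix: the MSDA regularizer is recast as a quadratic constraint defining a restricted class $\cW_\gamma$ of linear predictors, the Rademacher complexity of that weighted ball is computed by whitening with $\Sigma_X^{(M)}$, and a standard Rademacher-complexity generalization bound plus Lipschitz contraction finishes the argument, with the MSE/feature-level case handled exactly as you suggest by rerunning the argument on the last layer with covariates $\sigma(Wx)$. Two minor points where the paper is cleaner than your outline: for a GLM $f_\theta(x)=\theta^\intercal x$ is linear in $x$, so $\nabla^2 f_\theta=0$ and the sign-indefinite term $\cR_3^{(\text{MSDA})}$ you worry about vanishes identically rather than needing to be absorbed; and the $h''$-weighting you gloss over ("up to the $h''$ weighting") is precisely what the paper's $\rho$-retentiveness assumption $\left(\bE_x A''(v^\intercal x)\right)^2\ge\rho\,\bE_x(v^\intercal x)^2$ is for---without such a lower bound on the curvature, a bound on $\cR_2^{(\text{MSDA})}$ does not confine $\theta$ to a norm ball.
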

\revision{
\begin{proof}[Proof outline of \cref{thm::MSDA-generalization}]
MSDA regularization can be altered to the original empirical risk minimization problem with a constrained function set, and calculating Radamacher complexity of this function set gives the theorem.
\end{proof}}
In addition to \cref{thm::MSDA-robustness} and \cref{thm::MSDA-generalization}, we can prove that the optimal solution of \eqref{eqn::MSDA-original-loss} can achieve a perfect classifier (\ie, classifies every augmented sample $x$ correctly) in the logistic classification setting by following Chidambaram \etal \cite{chidambaram2021towards}. The full statement are in
\cref{appendix:chidambaram}.

\paragraph{Summary.}
Our unified theoretical lens for MSDA shows that for any MSDA method formed as \cref{eqn::MSDA-original-loss}, the method satisfies that (1) it behaves as a regularizer of input gradients, Hessian, and the first layer parameters (\cref{thm::MSDA-loss}); (2) there exists a mask $M$ for any desired regularization coefficients $a_{jk}$ (\cref{thm::mask}) (3) it achieves better adversarial robustness (\cref{thm::MSDA-robustness}) and generalization (\cref{thm::MSDA-generalization}) than the vanilla training.
Interestingly, \cref{thm::MSDA-loss} shows the difference between different MSDA design choices (\eg, different $M$, such as linear interpolation \cite{zhang2017mixup}, cropped box \cite{yun2019cutmix}) will lead to different magnitudes of the input gradient regularization \eqref{eqn::thm1-Rs}.

\section{Comparison of Different MSDA Design Choices: The Role of Masks}
\label{sec::meanofmask}

As we observed in the previous section, different design choices for MSDA (\ie, the choice of $M$) affect to the degree of the regularization in \cref{thm::MSDA-loss} (\ie, $\cR_2^{(\text{MSDA})}$ and $\cR_3^{(\text{MSDA})}$) by the relationships of pixels. In this section, we show how different MSDA methods lead to different regularization effects by empirical studies; we first show the values of the regularization coefficients $a_{jk}$ by varying masks; then we show the input gradient values that are regularized by $a_{jk}$ \eqref{eqn::regularization_coefficient} after the MSDA training; finally, we show that the best choice of the mask design can be varying by the target task settings.
In addition, we propose two generalized versions of Mixup and CutMix, called \ours and \oursgaussian, that empirically show the intermediate property of Mixup and CutMix.

\paragraph{Introduction to \ours and \oursgaussian.}
Recall that the regularization coefficients $a_{jk}$ is determined by $M$ (\cref{eqn::regularization_coefficient}). For example, by choosing $M$ = $\lambda \vec{1}$ (\ie, Mixup), $a_{jk}$ is always $(1-\lambda)^2$.
On the other hand, the result slightly changes for CutMix: $a_{jk}$ depends on how $j$ and $k$ are close. Informally, due to dependency between $M_j$ and $M_k$ (as $M$'s component is always 0 in the cropped box regions and 1 in others), close $j$ and $k$ give large $a_{jk}$, but distant $j$ and $k$ give small $a_{jk}$. 
$ a_{jk}$ is calculated as
{\small
\begin{equation}
\label{eqn::cutmix-coef}
    a_{jk} =\frac{\max(\min(h(j_1) - l(k_1), h(k_1) - l(j_1)), 0)  \max(\min(h(j_2) - l(k_2), h(k_2) - l(j_2)), 0)}{(n - [\sqrt{1-\lambda}n])^2}    
\end{equation}
}
where $j = (j_1, j_2), k= (k_1, k_2), h(t) = \min(t, n-[\sqrt{1-\lambda}n]), l(t) = \max(t- [\sqrt{1-\lambda}n], 0)$.

We visualize $a_{jk}$ of different MSDA methods in \cref{fig:coefcomp}. We compare Mixup, CutMix, \stochmixupcutmix. We also propose two generalized MSDA methods, named \textbf{\ours} and \textbf{\oursgaussian}. 
Before comparing the methods, we first formally define \stochmixupcutmix, \ours and \oursgaussian. These methods can be formed as \eqref{eqn::msda}
where the definition of $M$ is varying by the methods.

\textit{\stochmixupcutmix} is a practical variant of MSDA by considering Mixup and CutMix at the same time. By a simple alternation of two augmentations, the state-of-the-art performances on large-scale datasets are shown \cite{touvron2021training,wightman2021resnet}.
\stochmixupcutmix is the same as \cref{eqn::msda} by setting $M(\lambda) = (1-\lambda) \vec{1}$ with probability $q$ and $M(\lambda) = M^{\text{cutmix}}(\lambda)$ with probability $1-q$. We choose $q=0.5$ as \cite{touvron2021training,wightman2021resnet}. In our loss function perspective, the regularizing coefficient terms (\ie, $\cR_2, \cR_3$) become the average of Mixup and CutMix's regularization coefficient. Namely, let $a_{ij}^\text{mixup} = (1-\lambda)^2$ be a regularization coefficient of Mixup and $a_{ij}^\text{cutmix}$ be a regularization coefficient of CutMix \eqref{eqn::cutmix-coef}, then the regularization coefficients of \stochmixupcutmix is
$q a_{ij}^\text{cutmix} + (1-q)a_{ij}^\text{mixup}$.

Here, we additionally propose two MSDA variants, \ours and \oursgaussian, that leverage the advantages of Mixup and CutMix, resulting in showing the intermediate property between Mixup and CutMix.

\begin{figure}[t]
    \centering
    \includegraphics[width=.95\linewidth]{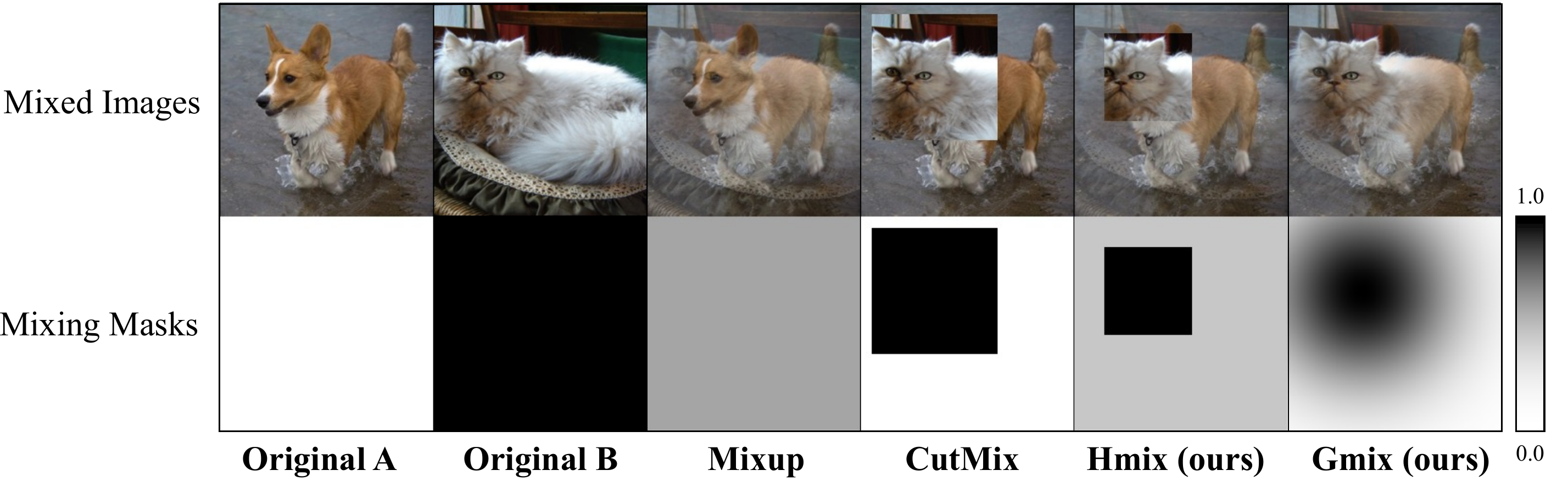}
    \caption{\small {\bf Examples generated by different MSDAs.}  From left to right, two original images to be mixed, Mixup, CutMix sample, \ours, and \oursgaussian.
    The first and the second rows show generated samples and their mixing masks $M$, respectively.
    We set $\lambda=0.65$ for all images and $r=0.5$ for \ours.
    }
    \label{fig:msda_samples}
    \vspace{-.5em}
\end{figure}
\textit{\oursfull (\ours)} combines Mixup and CutMix by shrinking the CutMix cropped box region and linearly interpolating two images in the areas out of the box as Mixup. The shrinking ratio of the cropped box region is determined by the ratio $r$. \ours can be written as \eqref{eqn::msda} by setting $M$ by (1) randomly cropped box region with side length $\sqrt{1-\lambda}\sqrt{r}N$ where $N$ is the side length of the original image, and make $M$'s component in the box region as 0 (2) in the areas other than the box, we set $M_i$ as $\frac{\lambda}{1 - (1-\lambda)r}$. We can easily check that $\bE[M] = \lambda \vec{1}$. As $r \to 0$, this method goes to Mixup, and as $r \to 1$, this method goes to CutMix.
Note that the ratio $r$ can be a random variable, such as $Beta(\gamma, \gamma)$.
In this case, if we set $\gamma \to 0$, as $Beta(\gamma, \gamma)$ goes to Bernoulli distribution, this is equivalent to \stochmixupcutmix. 

We propose \textit{\oursgaussianfull (\oursgaussian)} to relax the CutMix box condition to a continuous version as the rectangle cropping of CutMix causes implausible augmented data, \eg, the boundary between two mixed samples. Therefore, we combine two ideas of Mixup and CutMix. Firstly, we select a point $p$ from the given input. Then, we make $M_i$ as the related function with $\norm{i-p}^2$. Specifically, we use the Gaussian function for making $M$: (1) randomly select a point $p$ in image (2) in the areas other than the box, we set $M_i$ as $1 - \exp\left(-\frac{\norm{i-p}^2 \pi}{2(1 - \lambda) N^2}\right)$.
The proposed \oursgaussian has the following $a_{ij}$
\begin{align}
    a_{ij} & = \frac{1}{N^2}\sum_{p \in \text{pixel}} \exp\left(\frac{-\pi}{2(1 - \lambda) N^2}\left(-\norm{i-p}^2 - \norm{j - p}^2\right)\right) \nonumber \\
    &~=\int_{\bR^2} \exp\left(\frac{-\pi}{2(1 - \lambda) N^2}\left(-\norm{i-p}^2 - \norm{j - p}^2\right)\right) dx  \nonumber
    \\
    &= (1-\lambda) \exp\left(\frac{-\pi}{(1 - \lambda) N^2}\norm{\frac{i-j}{2}}^2\right). \label{eqn::gaussian-mixup}
\end{align}
As seen in \cref{eqn::gaussian-mixup}, $a_{ij}$ smoothly goes down when the pixel distance becomes larger.

\cref{fig:msda_samples} shows the examples generated Mixup, CutMix, \ours, and \oursgaussian. 
The proposed methods (\ours and \oursgaussian) generate images in a hybrid form with the properties of both Mixup and CutMix.

\paragraph{Comparison in terms of regularization coefficients $a_{jk}$.}
We illustrate the regularization coefficients $a_{jk}$ of the different MSDA methods in \cref{fig:coefcomp}. In particular, we fix the mask parameter $\lambda$ to 0.5 and the input resolution to 64 $\times$ 64. \cref{fig:coefcomp} shows the difference between the MSDA methods in terms of how they regularize the input gradients and input Hessians: Mixup has equal weights to every gradient component or Hessian component, while CutMix gives high regularization in close coordinate gradient products or Hessian.
We also observe that the hybrid methods (\eg, \stochmixupcutmix, \ours, and \oursgaussian) show the intermediate coefficient values of Mixup and CutMix.

\begin{figure}[t]
    \centering
    \includegraphics[width=.95\linewidth]{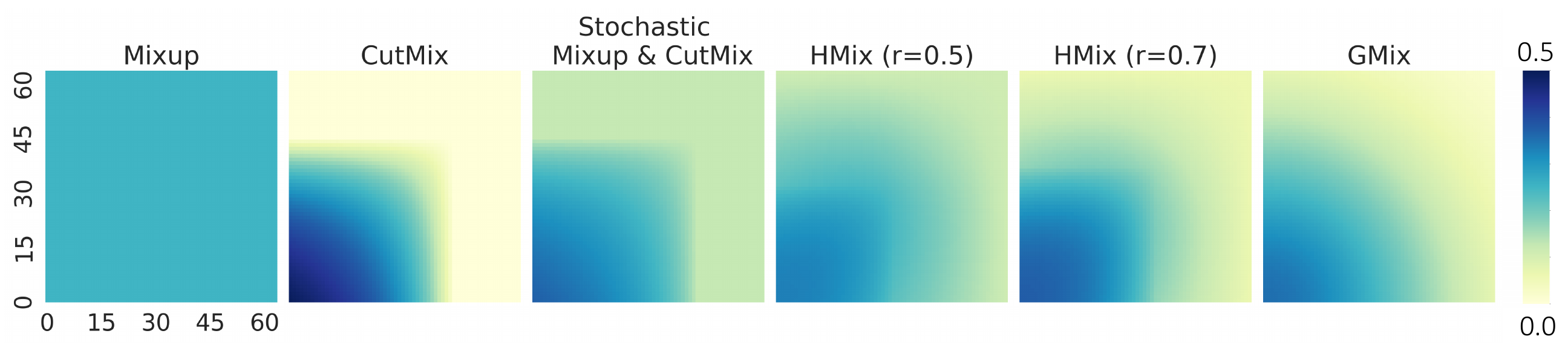}
    \caption{\small {\bf Visualization of regularization coefficients for different MSDA methods.} $a_{ij}$ values of Mixup, CutMix, \stochmixupcutmix (the alternation of Mixup and CutMix), \ours, \oursgaussian (described in \cref{sec::meanofmask} and \cref{appendix::hmix-regularizer}) are shown.
    Each $(x,y)$ value is computed by $\bE_i a_{i,i+(x,y)}$ where $i$ is a pixel vector.
    }
    \label{fig:coefcomp}
\end{figure}

\revision{
\paragraph{Comparison in terms of the regularized input gradients after MSDA training.}
Equation \eqref{eqn::thm1-GH} shows that the regularization term $a_{ij}$ directly affects to the pixel gradients $|\partial_i f_\theta(x_k) \partial_{j} f_\theta(x_k)|$ in our approximated loss function. The purpose of \cref{fig:mixup-cutmix-comp} is to show how the pixel gradients are actually regularized after training. We investigate the amount of the regularized input gradients by $|\partial_v f_\theta(x) \partial_{v+p} f_\theta(x)|$ with respect to the pixel distance vector $p$ for trained models by different MSDA methods. Here, if our approximated loss function actually behaves as a regularization, then we can expect that the pixel gradients $|\partial_v f_\theta(x) \partial_{v+p} f_\theta(x)|$ is small when $a_{ij}$ is large for the given $p$.
 
We first define the partial gradient product as follows: 
\begin{align}
    \text{PartialGradProd}(x,p) = \max_{v} |\partial_v f_\theta (x) \partial_{v+p} f_\theta (x)|\label{eqn::partialgradprod}
\end{align}
Now, we visualize the pixel-wise maximum values of $\text{PartialGradProd}(x,p)$ in \cref{fig:mixup-cutmix-comp}. We train different models $f_\theta$ on resized ImageNet (64 x 64) and measure the values on the validation dataset. The $x$-axis and $y$-axis of \cref{fig:mixup-cutmix-comp} denote the pixel distance $p$ along each $x$ and $y$ axis, and the scale of the colorbar denotes the value of the maximum partial gradient product. In the figure, we can observe that CutMix reasonably regularizes effectively in the input gradients products when a pixel distance is small; these results aligned with our previous interpretation, CutMix behaves a pixel-level regularizer where it gives stronger regularization (larger $a_{ij}$) to the closer pixels.
Note that we are not discussing the relationship between regularizing effects and accuracy but discussing regularizing coefficients and the optimized function's pixel gradients. }

\begin{figure}
\centering
\begin{subfigure}{0.28\textwidth}
    \includegraphics[width=\textwidth]{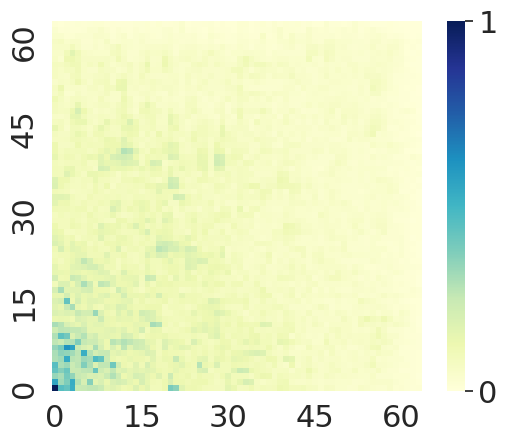}
    \caption{Vanilla (no MSDA)}
    \label{fig:first}
\end{subfigure}
\hfill
\begin{subfigure}{0.28\textwidth}
    \includegraphics[width=\textwidth]{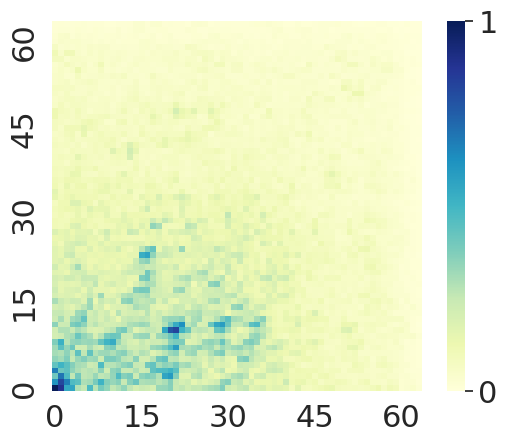}
    \caption{Mixup}
    \label{fig:second}
\end{subfigure}
\hfill
\begin{subfigure}{0.28\textwidth}
    \includegraphics[width=\textwidth]{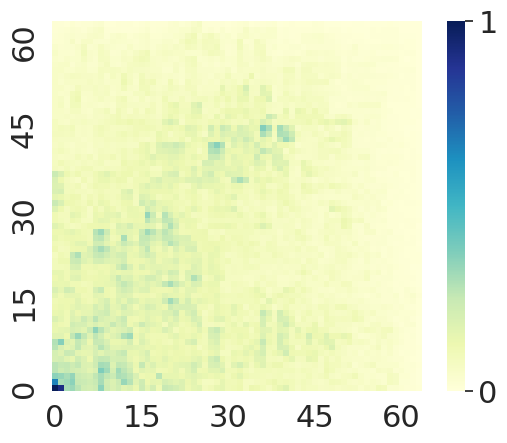}
    \caption{CutMix}
    \label{fig:third}
\end{subfigure}
    
\caption{\small {\bf Regularized input gradients by MSDA.} The normalized pixel-wise partial gradient norm product comparison among the models trained with vanilla setting (a), Mixup (b) and CutMix (c). \revision{We plotted \eqref{eqn::partialgradprod}, and $x$ and $y$ axis denote the pixel distance $p$ along each axis.}
} 
\label{fig:mixup-cutmix-comp}
\vspace{-1em}
\end{figure}

\paragraph{Understanding application cases when a specific MSDA design choice works better than others.}
From our theoretical results and empirical studies, we have shown that the design choice of MSDAs (\ie, $M$) leads to different regularization effects by regularization coefficient $a_{jk}$. Furthermore, as we have shown in \cref{thm::mask}, there always exists a mask that can form any desired $a_{jk}$.
We hypothesize that for the given dataset, if a short distance relation is relatively more important than longer distance relations, then CutMix will be better than Mixup. On the contrary, in the opposite case, if a short distance relation is relatively less important, then Mixup will be better than CutMix. 

Here, we study different task scenarios when different $a_{jk}$s are required by controlling the pixel-level importance of ImageNet-100 \cite{tian2019contrastive} training images.
In particular, we design two different scenarios where each of them needs different regularization strategies due to the different pixel-level importance of each task.
The results are shown in \cref{table:ablation}. 
\revision{We also report the performance of our proposed methods in both scenarios 1 and 2 in \cref{appendix::additional-experiments}.}

\paragraph{Scenario 1: Smaller objects by large crop size.} 
We randomly crop a large region (80\% to 100\%) of an image and resize to $64 \times 64$ to train a model. 
As the objects in the image become small, a close-distance relationship might be more important than a large-distance relationship. Here, we expect CutMix performs better than Mixup as shown in \cref{table:ablation}.

\paragraph{Scenario 2: Larger objects by small crop size} 
We randomly crop a small region (25\% to 40\%) of an image and resize to $64 \times 64$ to train a model. 
Contrary to Scenario 1, the objects in the image would become large in the cropping region and the large-distance relationship might be important, 
therefore, we expect that Mixup performs better than CutMix. This hypothesis is aligned to \cref{table:ablation}.

\begin{table}[t]
\centering
\small
\caption{\small \textbf{Different tasks need different MSDA strategies.} Validation accuracies of Mixup and CutMix trained networks on two different scenarios on ImageNet-100. Each scenario assumes different pixel importances.}
\label{table:ablation}
\vspace{.5em}
\begin{tabular}{lccc}
\toprule
                                   & {Mixup}       & {CutMix}     & $\Delta$ (CutMix - Mixup)  \\ \midrule
{Scenario 1: Large crop}                       &   58.3                    &    \textbf{64.4}      & \textcolor{darkergreen}{\textbf{+6.1}}             \\ 
{Scenario 2: Small crop}                       &   \textbf{67.7}                    &     67.0     & \textcolor{red2}{\textbf{-0.7}}            \\ 
\bottomrule
\end{tabular}
\vspace{-1em}
\end{table}

\section{Comparison of Different MSDA Design Choices: An Empirical Validation}
\label{sec::experiment}

In this section, we compare various MSDA methods on two popular large-scale image classification benchmarks: CIFAR-100 \cite{krizhevsky2009learning} and ImageNet-1K \cite{deng2009imagenet}.
We will confirm that our proposed design choices, \ours and \oursgaussian, are not only theoretically interpolating Mixup and CutMix in the toy settings, but also taking benefits of each method by showing great performances in real-world applications.
\revision{The implementation details and the hyper-parameter study can be found in
\cref{appendix::experimental}.}

\paragraph{Results on CIFAR-100 classification.}
We evaluate our method (\ours and \oursgaussian) against baseline MSDA methods including Mixup \cite{zhang2017mixup}, CutMix \cite{yun2019cutmix}, \stochmixupcutmix, \cite{touvron2021training} and PuzzleMix \cite{kim2020puzzle} on CIFAR-100 dataset.
Here, we include PuzzleMix, to see the effectiveness of our data-agnostic method against the data-aware mask strategy.
Note that although our theoretical results (\cref{sec::approxloss}) are based on the data-agnostic mask selection methods, our theoretical results can be easily extended to the data-dependent mask selection methods. We leave the extension as a future research direction.

To see the generalizability of our methods, we train various network architectures including ResNet-56 (RN56) \cite{he2016deep}, WideResNet28-2 (WRN28-2) \cite{zagoruyko2016wide}, PreActResNet-18 (PreActRN18) \cite{he2016identity}, PreActResNet-34 (PreActRN34) \cite{he2016identity} and PreActResNet-50 (PreActRN50) \cite{he2016identity} with various MSDA methods. 
We train networks for 300 epochs using SGD optimizer with  a learning rate $0.2$.
\cref{table:cifar-classification} shows the summarized results. 
We set the hyper-parameter $\alpha$ for Mixup, CutMix, and \stochmixupcutmix to $1$.
$\alpha$ for \ours and \oursgaussian were set to $1$ and $0.5$, respectively. 
We use $r=0.5$ for \ours.
In the table, \ours and \oursgaussian outperform Mixup only and CutMix only counterparts and \stochmixupcutmix often show comparable performances to \ours and \oursgaussian. 
Our methods show comparable performance with the state-of-the-art data-dependent strategy PuzzleMix.

\begin{table}[t]
\centering
\small
\caption{\small \textbf{CIFAR-100 classification.} Comparison of various MSDA methods on various network architectures. Note that \textcolor{gray}{PuzzleMix} needs additional computations (twice than others) for computing the input saliency.}
\label{table:cifar-classification}
\vspace{.5em}
\begin{tabular}{lccccc}
\toprule
Augmentation Method                  & RN56 & WRN28-2 &  PreActRN18 &  PreActRN34 &  PreActRN50 \\ \midrule
Vanilla (no MDSA)   & 73.23 & 73.50    & 76.73 & 77.68 &	79.07 \\
Mixup               & 73.12 & 74.05    & 77.21 & 79.02 &	79.34 \\
CutMix              & 74.83 & 74.79    & 78.66 & 80.05 &	81.23 \\
\textcolor{gray}{PuzzleMix}           &    \textcolor{gray}{-}   &  \textcolor{gray}{76.51}   & \textcolor{gray}{79.38} & \textcolor{gray}{80.89} &	\textcolor{gray}{82.46} \\
\stochmixupcutmix    & 74.88 & 75.49    & \textbf{79.25} & {81.05} &	81.21 \\ \midrule
\rowcolor{Gray}
\textbf{\ours} (ours)               & {74.99} & {75.68}    & \textbf{79.25} & {\textbf{81.07}} &	{81.38} \\
\rowcolor{Gray}
\textbf{\oursgaussian} (ours)      & {\textbf{75.75}} &  \textbf{76.15}  & 79.17 & 80.52 & \textbf{81.45} \\ \bottomrule
\end{tabular}
\end{table}

\begin{wraptable}{r}{0.5\linewidth}
\centering
\small
\vspace{-1.5em}
\caption{\small \textbf{ImageNet-1K classification.} Comparison of various MSDA methods on ResNet-50 architecture.}
\label{table:imagenet-classification}
\begin{tabular}{lc}
\toprule
Augmentation Method & Top-1 accuracy \\ \midrule
Vanilla (no MDSA)       &		75.68 (+0.00) \\
Mixup	        &		77.78 (+2.10) \\
CutMix	        &		78.04 (+2.36) \\
\stochmixupcutmix	&		{78.13} (+2.45)\\ \midrule
\rowcolor{Gray}
\textbf{\ours} (ours)	        &		\textbf{78.38} (+2.70) \\
\rowcolor{Gray}
\textbf{\oursgaussian} (ours)	&		{78.13} (+2.45) \\ \bottomrule
\end{tabular}
\vspace{-1em}
\end{wraptable}
\paragraph{Results on ImageNet-1K classification.}
\cref{table:imagenet-classification} shows the comparison of various MSDA methods on ImageNet-1K. 
We train ResNet-50 \cite{he2016deep} with various MSDA methods for 300 epochs using SGD optimizer with a learning rate $0.1$.
We set the hyper-parameter $\alpha$ for all methods except Mixup to $1$, while Mixup has $\alpha=0.8$.
We use $r=0.75$ for \ours. 
Here, we do not include PuzzleMix because it needs heavy additional computations to compute the input saliencies.
In the table, \ours shows the best performance, while \oursgaussian and \stochmixupcutmix show the second-best performances.
\revision{Evaluations on various robustness benchmarks are in \cref{appendix::additional-experiments}.}

\section{Conclusion}
We analyze MSDA by a unified theoretical framework. Our unified theoretical results show that any MSDA method behaves as a regularization on the input gradients and Hessians, where the degree of the regularization is controlled by the design choice of MSDA. We compare various MSDA methods in (1) regularization coefficient (2) regularized gradients (3) model performances in various scenarios with different pixel-level importance. We propose two simple MSDA methods, \ours and \oursgaussian, which leverage the benefits of Mixup and CutMix by their design. Our experimental results show that \ours and \oursgaussian outperform popular MSDA methods Mixup and CutMix. Furthermore, our methods show comparable or outperformed performances than the state-of-the-art MSDA method, \stochmixupcutmix, in CIFAR-100 and ImageNet classification tasks.

\bibliographystyle{abbrv}
\bibliography{Aug}

\appendix
\clearpage
\section*{Appendix}
We include additional materials in this document, including additional theoretical results (\cref{appendix::thm1pf}, \ref{appendix::n-mixup}, \ref{appendix::thm2pf}, \ref{appendix::thm3pf}, \ref{appendix:chidambaram}), experimental details (\cref{appendix::experimental}), additional experiments (\cref{appendix::additional-experiments}), and additional explanation for various MSDAs based on our analysis (\cref{appendix::hmix-regularizer}, \ref{appendix::other-msdas}).

\revision{\section*{Negative Societal Impacts} 
Our work may help the situation that data is insufficient, as other data augmentation papers do. We mainly proposed the theoretical aspect of MSDA, so there are no negative societal impacts. }

\section{Proof of Theorem~\ref{thm::MSDA-loss}}
\label{appendix::thm1pf}
For convenience, we restate Theorem~\ref{thm::MSDA-loss}.

\textbf{Theorem 1.}
Consider a loss function $l \in \cL$. We define $\tilde{D}_\lambda$ as $\frac{\alpha}{\alpha + \beta}\text{Beta}(\alpha + 1, \beta) + \frac{\beta}{\alpha + \beta}\text{Beta}(\beta + 1, \alpha)$. Assume that $\bE_{r_x \sim \cD_X}[r_x] = 0$. Then, we can re-write the general MSDA loss \eqref{eqn::MSDA-original-loss} as
\begin{align*}
L^{\text{MSDA}}_m(\theta) = L_m(\theta) + \sum_{i=1}^3 \cR_i^{(\text{MSDA})}(\theta) + \bE_{\lambda \sim \tilde{\cD}(\lambda)}\bE_M [(1-M)^\intercal \varphi(1-M) (1-M)], 
\end{align*}
where $\lim_{a \to 0}\varphi(a)= 0 $,
{\small
\begin{align*}
\begin{split}
    &\cR^{(\text{MSDA})}_1(\theta) = \frac{1}{m} \sum_{i=1}^m( h'(f_\theta(x_i))-y_i) \bE_{r_x \sim \cD_X}\left( \nabla f_\theta(x_i)\odot  (r_x - x_i)\right)^\intercal \bE_{\lambda \sim \tilde{D}_\lambda} \bE_M(1-M) 
    \\&\qquad \qquadß= \frac{1}{m} \sum_{i=1}^m( y_i - h'(f_\theta(x_i))) \left( \nabla f_\theta(x_i)^\intercal x_i\right) \bE_{\lambda \sim \tilde{D}_\lambda} (1-\lambda),
    \\
    &\cR^{(\text{MSDA})}_2(\theta) = \frac{1}{2m}\sum_{i=1}^m h''(f_\theta(x_i)) \bE_{\lambda \sim \tilde{D}_\lambda} \fG(\cD_X, x_i, f, M)  ,  
    \\
    &\cR^{(\text{MSDA})}_3(\theta) = \frac{1}{2m}\sum_{i=1}^m(h'(f_\theta(x_i))-y_i) \bE_{\lambda \sim \tilde{D}_\lambda} \fH(\cD_X, x_i, f, M),
\end{split}
\end{align*}
and 
\begin{align*}
\begin{split}
    \fG(\cD_X, &x_i, f, M) = \bE_M (1-M)^\intercal \bE_{r_x \sim \cD_X} \left(\nabla f(x_i) \odot (r_x- x_i) \left(\nabla f(x_i) \odot (r_x- x_i)\right)^\intercal\right)(1-M)
    \\
    &= \sum_{j,k \in \text{coord}}a_{jk} \partial_j f_\theta(x_i)\partial_k f_\theta(x_i)\left(\bE_{r_x \sim \cD_X}[r_{xj}r_{xk}] + x_{ij}x_{ik}\right),
    \\
    \fH(\cD_X, &x_i, f, M) = \bE_{r_x \sim \cD_X}\bE_M (1-M)^\intercal  \left(\nabla^2f_\theta(x_i) \odot \left((r_x- x_i)(r_x- x_i)^\intercal\right)\right)(1-M)
    \\
    &= \sum_{j,k \in \text{coord}} a_{jk} \left(\bE_{r_x \sim \cD_X}[r_{xj}r_{xk}\partial^2_{jk}f_\theta(x_i)] + x_{ij}x_{ik}\partial^2_{jk}f_\theta(x_i)\right),
\end{split}
\end{align*}
where

\begin{align*}
    a_{jk} := \bE_M[(1-M_j)(1-M_k)].
\end{align*}
}

\begin{proof}[Proof of Theorem~\ref{thm::MSDA-loss}]
Due to the assumption of the theorem, we can rewrite the empirical loss for the non-augmented population as 
\[
L_m(\theta) = \frac{1}{m}\sum_{i=1}^m l(\theta, z_i) = \frac{1}{m}\sum_{i=1}^n [h(f_\theta(x_i)) - y_i f_\theta(x_i)].
\]
Similarly, we can rewrite the MSDA loss as 
\begin{align*}
L^{\text{MSDA}}_m(\theta) &= \frac{1}{m^2}  \sum_{i,j=1}^m\bE_{\lambda \sim \cD_\lambda} \bE_Ml(\theta, \msda{z}{i,j}(\lambda, 1-\lambda))
\\
&= \frac{1}{m^2}  \sum_{i,j=1}^m\bE_{\lambda \sim  \text{Beta}(\alpha, \beta)} \bE_M[h(f_\theta( \msda{x}{i,j}(M, 1-M))) - \msda{y}{i,j}(\lambda, 1-\lambda) f_\theta(\msda{x}{i,j}(M, 1-M))].    
\end{align*}

Putting the definition of $\msda{z}{i,j}(\lambda, 1-\lambda)$ to the  equation above, we have 
\begin{align*}
    L^{\text{MSDA}}_m(\theta) &= \frac{1}{m^2} \sum_{i,j=1}^m \Biggl( \bE_{\lambda \sim  \text{Beta}(\alpha, \beta)}  \bE_M  \biggl( \lambda (h(f_\theta( \msda{x}{i,j}(M, 1-M))) - y_i f_\theta(\msda{x}{i,j}(\lambda, 1-\lambda)))
    \\
    & +  (1-\lambda) (h(f_\theta( \msda{x}{i,j}(M, 1-M))) - y_j f_\theta(\msda{x}{i,j}(M, 1-M)))\biggr)\Biggr)
    \\    
    &= \frac{1}{m^2}\sum_{i,j=1}^m \Biggl( \bE_{\lambda \sim  \text{Beta}(\alpha, \beta)} \bE_{B \sim \text{Bin}(\lambda)} \bE_M \biggl(  B (h(f_\theta( \msda{x}{i,j}(M, 1-M))) - y_i f_\theta(\msda{x}{i,j}(M, 1-M)))
    \\
    & +  (1-B) (h(f_\theta( \msda{x}{i,j}(M, 1-M))) - y_j f_\theta(\msda{x}{i,j}(M, 1-M)))\biggr)\Biggr).
\end{align*}
Note that $\lambda \sim \text{Beta}(\alpha, \beta)$ and $B | \lambda \sim \text{Bin}(\lambda)$. By conjugacy, we can write the joint distribution of $(\lambda, B)$ as 
\begin{align*}
    B \sim \text{Bin}\left(\frac{\alpha}{\alpha + \beta} \right), \qquad \lambda| B \sim \text{Beta}(\alpha + B, \beta + 1 - B). 
\end{align*}
Therefore, we have
\begin{align}
    L^{\text{MSDA}}_m(\theta) & = \frac{1}{m^2} \sum_{i,j=1}^m \biggl( \bE_{\lambda \sim  \text{Beta}(\alpha+1, \beta)} \bE_M \frac{\alpha}{\alpha + \beta} (h(f_\theta( \msda{x}{i,j}(\lambda, 1-\lambda))) - y_i f_\theta(\msda{x}{i,j}(\lambda, 1-\lambda))) \nonumber
    \\
    & +  \bE_{\lambda \sim  \text{Beta}(\alpha, \beta +1)} \bE_M \frac{\beta}{\alpha + \beta} (h(f_\theta( \msda{x}{i,j}(\lambda, 1-\lambda))) - y_j f_\theta(\msda{x}{i,j}(\lambda, 1-\lambda)))\biggr)\nonumber
    \\
    &= \frac{1}{m} \sum_{i=1}^n \bE_{\lambda \sim \tilde{\cD}(\lambda) } \bE_{r_{x}\sim \cD_x} \bE_M \left[h\left(f_\theta (M \odot x_i + (1-M) \odot r_x)\right) - y_i f_\theta (M \odot x_i + (1-M) \odot r_x)\right] \label{eqn::thm1-cutmix}
    \\
    &= \frac{1}{m} \sum_{i=1}^n \bE_{\lambda \sim \tilde{\cD}(\lambda) } \bE_{r_{x}\sim \cD_x} \bE_M l (\theta, \hat{z_i}), \label{eqn::thm1-reformulation}
\end{align}
where $\hat{z_i} = (M \odot x_i + (1-M) \odot r_x, y_i)$. 

Let $N = 1-M$. By defining $\phi_i(N) = h\left(f_\theta (x_i + N \odot (r_x - x_i))\right)   - y_i f_\theta ( x_i + N \odot (r_x - x_i))$ and applying Taylor expansion, we have 
\begin{align}
  \phi_i(N) = \phi_i(0) + \nabla_N \phi_i(0)^\intercal N + \frac{1}{2} N^\intercal \nabla^2_N \phi_i(0) N + N^\intercal \varphi(N) N,  \label{eqn::thm1-1}
\end{align}
where $\lim_{N \to 0}\varphi(N) = 0$. 
Firstly, we calculate $\phi_i(0)$ by 
\begin{align}
    \phi_i(0) = h(f_\theta(x_i)) - y_i f_\theta(x_i). \label{eqn::thm1-2}
\end{align}
Second, we calculate $\nabla_N \phi_i(0)$ by 
\begin{align*}
    \frac{\partial\phi_i(N)}{\partial N_k} &= \left(h'\left(f_\theta \left(x_i + N \odot (r_x - x_i)\right)\right) - y_i\right) \fracpartial{f_\theta}{x_{ik}}\left(x_i + N \odot (r_x - x_i)\right)(r_{xk} - x_{ik}),
\end{align*}
where we denote $N_k$ as the $k$th element of $N$, $x_{ik}$ as the $k$th element of $x_i$, and $r_{xk}$ as the $k$th element of $r_x$. 
Therefore, we have
\begin{align}
    \nabla_N \phi_i(0)^\intercal N &= (h'(f_\theta(x_i)) - y_i)\sum_{k} \left(\fracpartial{f_\theta}{x_{ik}}(x_i)(r_{xk} - x_{ik})\right)N_k \nonumber
    \\
    &= (h'(f_\theta(x_i)) - y_i) \left(\nabla f \odot (r_x- x_i)\right) \cdot N. \label{eqn::thm1-3}
\end{align} 
Finally, we calculate $\nabla^2_N \varphi_i(\vec{0})^T$ by 
\begin{align*}
    \frac{\partial^2\phi_k(N)}{\partial N_k \partial N_j} &= \frac{\partial}{\partial N_j} \left(\left(h'\left(f_\theta \left(x_i + N \odot (r_x - x_i)\right)\right) - y_i\right) \fracpartial{f_\theta}{x_{ik}}\left(x_i + N \odot (r_x - x_i)\right)(r_{xk} - x_{ik})\right)
    \\
    &= h''\left(f_\theta \left(x_i + N \odot (r_x - x_i)\right)\right)
    \\
    &\qquad \times \fracpartial{f_\theta}{x_{ik}}\left(x_i + N \odot (r_x - x_i)\right)(r_{xk} - x_{ik}) \fracpartial{f_\theta}{x_{ij}}\left(x_i + N \odot (r_x - x_i)\right)(r_{xj} - x_{ij}) 
    \\
    &+ \left(h'\left(f_\theta \left(x_i + N \odot (r_x - x_i)\right)\right) - y_i\right)
    \\
    &\qquad \times \frac{\partial^2 f_\theta}{\partial x_{ik} \partial x_{ij}}\left(x_i + N \odot (r_x - x_i)\right)(r_{xk}- x_{ik})(r_{xj} - x_{ij}).
\end{align*}
Therefore, we have 
\begin{align}
    \frac{1}{2} N^\intercal \nabla^2_N \phi_i(0) N &=\frac{1}{2} h''\left(f_\theta (x_i)\right) \sum_{k, j} \left(\fracpartial{f_\theta}{x_{ik}}(x_i)(r_{xk} - x_{ik})\fracpartial{f_\theta}{x_{ij}}(x_i)(r_{xj} - x_{ij}) N_kN_j\right) \nonumber
    \\
    &\qquad  + \frac{1}{2}\left(h'\left(f_\theta (x_i)\right) - y_i \right) \sum_{k,j} \frac{\partial^2 f_\theta}{\partial x_{ik} \partial x_{ij}}\left(x_i\right)(r_{xk}- x_{ik})(r_{xj} - x_{ij})N_k N_j \nonumber
    \\
    &= \frac{1}{2} h''\left(f_\theta (x_i)\right) N^\intercal\left( \left(\nabla f \odot (r_x- x_i)\right) \left(\nabla f \odot (r_x- x_i)\right)^\intercal\right) N  \nonumber
    \\
    &\qquad +  \frac{1}{2}\left(h'\left(f_\theta (x_i)\right) - y_i \right)N^\intercal \left(\nabla^2f_\theta(x_i) \odot \left((r_x- x_i)(r_x- x_i)^\intercal\right) \right) N  .\label{eqn::thm1-4}
\end{align}
Applying \eqref{eqn::thm1-2} - \eqref{eqn::thm1-4} to \eqref{eqn::thm1-1},
\begin{align}
 \phi_i(N) &= \left(h(f_\theta(x_i)) - y_i f_\theta(x_i)\right) + (h'(f_\theta(x_i)) - y_i) \left(\nabla f \odot (r_x- x_i)\right) \cdot N  \nonumber
  \\
  &+ \frac{1}{2} h''\left(f_\theta (x_i)\right) N^\intercal\left( \left(\nabla f \odot (r_x- x_i)\right) \left(\nabla f \odot (r_x- x_i)\right)^\intercal\right) N  \nonumber
    \\
    &+  \frac{1}{2}\left(h'\left(f_\theta (x_i)\right) - y_i \right)N^\intercal \left(\nabla^2f_\theta(x_i) \odot \left((r_x- x_i)(r_x- x_i)^\intercal\right) \right) N + N^\intercal \varphi(N) N  \label{eqn::thm1-5} 
\end{align}
Plugging \eqref{eqn::thm1-5} to \eqref{eqn::thm1-cutmix}, we conclude 
\begin{align*}
    L^{\text{MSDA}}_m(\theta) & =\frac{1}{m} \sum_{i=1}^n \bE_{\lambda \sim \tilde{\cD}(\lambda) } \bE_{r_{x}\sim \cD_x} \bE_M \phi(1-M)
    \\
    &= L_m(\theta) + \cR_1(\theta) + \cR_2 (\theta) + \cR_3(\theta) +  \bE_{\lambda \sim \tilde{\cD}(\lambda)}\bE_M [(1-M)^\intercal \varphi(1-M) M],
\end{align*}
where
\begin{align*}
    &\cR_1(\theta) =\frac{1}{m} \sum_{i=1}^m(h'(f_\theta(x_i)) - y_i) \left( \nabla f_\theta(x_i) \odot \bE_{r_x \sim \cD_X}[r_x - x_i]\right) \bE_{\lambda \sim \tilde{D}_\lambda} \bE_M(1-M),
    \\
    &\cR_2(\theta) = \frac{1}{2m}\sum_{i=1}^m h''(f_\theta(x_i)) \bE_{\lambda \sim \tilde{D}_\lambda} \bE_M(1-M)^\intercal  \bE_{r_x \sim \cD_X} \left[\nabla f(x_i) \odot (r_x- x_i) \left(\nabla f(x_i) \odot (r_x- x_i)\right)^\intercal\right] (1-M) ,    
    \\
    &\cR_3(\theta) = \frac{1}{2m}\sum_{i=1}^m \bE_{\lambda \sim \tilde{D}_\lambda} \bE_M(1-M)^\intercal  \bE_{r_x \sim \cD_X} \left[\nabla^2f_\theta(x_i) \odot \left((r_x- x_i)(r_x- x_i)^\intercal\right) \right] (1-M).  
\end{align*}
\end{proof}
\clearpage

\section{Extension of Mixup: $n$-Mixup}
\label{appendix::n-mixup}
In this section, due to notational complexity, we give the approximate loss function of the $n$-sample Mixup ($n$-Mixup). The same analysis can be applied to $n$-sample mixing strategy. 
We will define $n$-Mixup as followings. Mixup from the $\ffi = (i_1, i_2, \dots, i_n)$th samples with $\flam = (\lambda_1, \lambda_2, \dots, \lambda_n)$ which is drawn from $\cD_\Lambda$ (mainly Dirichlet distribution), is defined as $\tilde{z}_{\ffi} = \sum_{k=1}^n \lambda_k z_{i_k}$. Similarly, we can define the $n$-Mixup loss as 
\begin{align*}
    L^{\text{n-mixup}}_m(\theta) &= \bE_{ \ffi \sim \text{Unif}([m])}{\bE_{\flam \sim \cD_\Lambda} l(\theta, \tilde{z}_{\ffi}(\flam))} = \frac{1}{m^n}\sum_{\ffi}\bE_{\flam \sim \cD_\Lambda} l(\theta, \tilde{z}_{\ffi}(\flam)).
\end{align*}
Throughout this section, we consider $\cD_\Lambda$ as Dirichlet distribution (i.e. $\cD_\Lambda = $ Dir$(\falp)$ = Dir$(\alpha_1, \alpha_2, \dots, \alpha_n)$), which is the natural extension of Beta distribution. 
\begin{theorem}
\label{thm::reg-nmix}
Consider the loss function in $l \in \cL$. Then, we can rewrite the $n$-Mixup loss as 
\begin{align*}
    L^{\text{n-mix}}_m(\theta) & = \frac{1}{m} \sum_{k=1}^n \bE_{\flam \sim \tilde{\cD}(\Lambda) } \bE_{r_{x,2}, \cdots, r_{x, n} \sim \cD_x} \varphi_k(\lambda_2, \cdots, \lambda_n)
    \\
    &= L_m(\theta) + \cR_1(\theta) + \cR_2 (\theta) + \cR_3(\theta) +  \bE_{\flam \sim \tilde{\cD}(\Lambda)}[o(\|(\lambda_2, \cdots, \lambda_n)\|^2)],
\end{align*}
where 
\begin{align*}
    \cR_1(\theta) = \frac{\bE_{\flam \sim \tilde{\cD}_{\Lambda}}[1- \lambda_1]}{m} \sum_{i=1}^m (h'(f_\theta(x_i)) - y_i) \nabla f_\theta(x_i)^T \bE_{r_x \sim \cD_X}[r_x- x_i],
\end{align*}
\begin{align*}
    \cR_2(\theta) &= \frac{\bE_{\flam \sim \tilde{\cD}_{\Lambda}}[\sum_{j=2}^m \lambda_j^2 ]}{2m} \sum_{i=1}^m h''(f_\theta(x_i))  \nabla f_\theta(x_i)^T \bE_{r_x \sim \cD_X}[(r_x- x_i)(r_x- x_i)^T]\nabla_\theta f(x_i)
    \\
    &+ \frac{\bE_{\flam \sim \tilde{\cD}_{\Lambda}}[(1-\lambda_1)^2 - \sum_{j=2}^m \lambda_j^2 ]}{2m} \sum_{i=1}^m h''(f_\theta(x_i))  \nabla f_\theta(x_i)^T \bE_{r_x \sim \cD_X}[(r_x- x_i)] \bE_{r_x \sim \cD_X}[(r_x- x_i)^T]\nabla_\theta f(x_i), 
\end{align*}
\begin{align*}
    \cR_3(\theta) &= \frac{\bE_{\flam \sim \tilde{\cD}_{\Lambda}}[\sum_{j=2}^m \lambda_j^2 ]}{2m} \sum_{i=1}^m (h'(f_\theta(x_i)) - y_i) \bE_{r_x \sim \cD_X}[(r_x- x_i) \nabla^2 f_\theta (x_i) (r_x- x_i)^T]
    \\
    &+ \frac{\bE_{\flam \sim \tilde{\cD}_{\Lambda}}[(1-\lambda_1)^2 - \sum_{j=2}^m \lambda_j^2 ]}{2m} \sum_{i=1}^m (h'(f_\theta(x_i)) - y_i) \bE_{r_x \sim \cD_X}[(r_x- x_i)] \nabla^2 f_\theta (x_i) \bE_{r_x \sim \cD_X}[(r_x- x_i)^T].  
\end{align*}
\end{theorem}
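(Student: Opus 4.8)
The plan is to mirror the proof of \cref{thm::MSDA-loss} given in \cref{appendix::thm1pf}, replacing the Beta--Binomial conjugacy with its $n$-ary Dirichlet--Multinomial analogue and replacing the univariate Taylor expansion with a multivariate one. First I would use $l \in \cL$ to write $l(\theta, \tilde{z}_{\ffi}) = h(f_\theta(\tilde{x}_{\ffi})) - \left(\sum_k \lambda_k y_{i_k}\right) f_\theta(\tilde{x}_{\ffi})$, where $\tilde{x}_{\ffi} = \sum_k \lambda_k x_{i_k}$. The mixed label $\sum_k \lambda_k y_{i_k}$ equals $\bE_{C}[y_{i_C}]$ for a categorical selector $C \sim \text{Cat}(\flam)$, i.e. $\bP(C = k \mid \flam) = \lambda_k$. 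I then invoke Dirichlet--Multinomial conjugacy: with $\flam \sim \text{Dir}(\falp)$ the marginal of $C$ satisfies $\bP(C = k) = \alpha_k / \sum_j \alpha_j$ and $\flam \mid C = k \sim \text{Dir}(\alpha_1, \dots, \alpha_k + 1, \dots, \alpha_n)$. This is the exact $n$-ary generalization of the step that produced $B \sim \text{Bin}(\alpha/(\alpha+\beta))$ and $\lambda \mid B$ in the two-sample proof, and it is what defines the tilted distribution $\tilde{\cD}(\Lambda)$.

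Second, after conditioning on the selector I relabel so the chosen ``anchor'' is index $1$. Summing over the remaining indices and using that they are drawn i.i.d. from $\cD_X$, the samples $x_{i_2}, \dots, x_{i_n}$ become i.i.d. reference points $r_{x,2}, \dots, r_{x,n} \sim \cD_X$. Writing $\lambda_1 = 1 - \sum_{k \ge 2}\lambda_k$, the mixed input becomes $u_i := x_i + \sum_{k \ge 2}\lambda_k(r_{x,k} - x_i)$, so setting $\phi_{i}(\lambda_2, \dots, \lambda_n) := h(f_\theta(u_i)) - y_i f_\theta(u_i)$ reduces the loss to $\frac{1}{m}\sum_{i=1}^m \bE_{\flam \sim \tilde{\cD}(\Lambda)}\,\bE_{r_{x,2},\dots,r_{x,n}}\,\phi_i(\lambda_2,\dots,\lambda_n)$. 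I then take a second-order multivariate Taylor expansion of $\phi_i$ in the small weights $(\lambda_2, \dots, \lambda_n)$ about $0$, with remainder $o(\|(\lambda_2,\dots,\lambda_n)\|^2)$.

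Third, I collect terms by order. The zeroth-order term reconstructs $L_m(\theta)$. The first-order term is $\sum_{k \ge 2}\lambda_k (h'(f_\theta(x_i)) - y_i)\nabla f_\theta(x_i)^\intercal(r_{x,k} - x_i)$; since the references are identically distributed, taking the expectation over $r_{x,k}$ yields $\cR_1$ with coefficient $\bE_{\tilde{\cD}_\Lambda}[1 - \lambda_1]$ and factor $\bE_{r_x \sim \cD_X}[r_x - x_i]$. The second-order term carries $\tfrac{1}{2}\sum_{j,k \ge 2}\lambda_j\lambda_k$ times an $h''$ (gradient--gradient) piece and an $(h'(f_\theta(x_i))-y_i)$ (Hessian) piece. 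Here is the main obstacle: I must split this double sum into its diagonal ($j = k$) and off-diagonal ($j \ne k$) contributions. Because the reference points are i.i.d., $\bE[(r_{x,j}-x_i)(r_{x,k}-x_i)^\intercal]$ equals $\bE_{r_x}[(r_x - x_i)(r_x - x_i)^\intercal]$ when $j = k$ and factors as $\bE_{r_x}[r_x - x_i]\,\bE_{r_x}[r_x - x_i]^\intercal$ when $j \ne k$. The diagonal contribution therefore carries the combinatorial weight $\bE_{\tilde{\cD}_\Lambda}[\sum_{j \ge 2}\lambda_j^2]$, while the off-diagonal contribution carries $\bE_{\tilde{\cD}_\Lambda}[\sum_{j \ne k}\lambda_j\lambda_k] = \bE_{\tilde{\cD}_\Lambda}[(1-\lambda_1)^2 - \sum_{j \ge 2}\lambda_j^2]$. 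Separating these two cases for both the $h''$ and $(h'-y)$ pieces yields exactly the two summands each of $\cR_2$ and $\cR_3$.

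The bulk of the difficulty is purely combinatorial bookkeeping: verifying that summing over all mixing indices together with the selector reproduces the tilted measure $\tilde{\cD}(\Lambda)$, and that the diagonal/off-diagonal split telescopes precisely into the coefficients $\sum_{j\ge2}\lambda_j^2$ and $(1-\lambda_1)^2 - \sum_{j\ge2}\lambda_j^2$. The analytic content (conjugacy, the Taylor expansion, and control of the remainder via $\lim_{a\to 0}\varphi(a) = 0$) is identical to the two-sample case of \cref{thm::MSDA-loss}, so no new estimates are required; if desired, the further assumption $\bE_{r_x \sim \cD_X}[r_x] = 0$ specializes the reference-point expectations exactly as in \cref{thm::MSDA-loss}.
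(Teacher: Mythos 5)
Your proposal follows essentially the same route as the paper's proof: Dirichlet--Multinomial conjugacy (your categorical selector $C$ is the paper's one-hot $\fbet\sim\text{Mult}(\flam)$) to obtain the tilted distribution $\tilde{\cD}_\Lambda$, reduction to an anchor sample plus i.i.d.\ reference points, a second-order multivariate Taylor expansion in $(\lambda_2,\dots,\lambda_n)$, and the diagonal/off-diagonal split of the quadratic form to produce the $\sum_{j\ge 2}\lambda_j^2$ and $(1-\lambda_1)^2-\sum_{j\ge 2}\lambda_j^2$ coefficients. This is correct; if anything, you make explicit the combinatorial step the paper leaves implicit when passing from the double sum over $j,k\ge 2$ to the two terms of $\cR_2$ and $\cR_3$.
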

As Mixup's approximate loss function \cite{zhang2020does} or Theorem~\ref{thm::MSDA-loss}, $n$-Mixup also regularizes $\nabla f$ and $\nabla^2 f$. 
\begin{proof}
Due to the assumption of the theorem, we can rewrite the empirical loss for the non-augmented population as 
\[
L_m(\theta) = \frac{1}{m}\sum_{i=1}^m l(\theta, z_i) = \frac{1}{m}\sum_{i=1}^n [h(f_\theta(x_i)) - y_i f_\theta(x_i)].
\]
Similarly, we can rewrite the $n$-Mixup loss as 
\[
L^{\text{n-mix}}_m(\theta) = \frac{1}{m^n}\sum_{\ffi}\bE_{\flam \sim \cD_\Lambda} l(\theta, \tilde{z}_{\ffi}(\flam))= \frac{1}{m^n}\bE_{\flam \sim  \text{Dir}(\falp)} \sum_{\ffi}[h(f_\theta( \tilde{x}_\ffi(\flam))) - \tilde{y}_\ffi(\flam) f_\theta(\tilde{x}_\ffi(\flam))]. 
\]
Putting the definition of $\tilde{x}_\ffi$ and $\tilde{y}_\ffi$ to the equation above, we have 
\begin{align*}
    L^{\text{n-mix}}_m(\theta) &= \frac{1}{m^n}\bE_{\flam \sim  \text{Dir}(\falp)} \sum_{\ffi} \sum_{k=1}^n \lambda_k (h(f_\theta( \tilde{x}_\ffi(\flam))) - y_k f_\theta(\tilde{x}_\ffi(\flam)))
    \\
    & = \frac{1}{m^n}\sum_{\ffi}\bE_{\flam \sim  \text{Dir}(\falp)} \bE_{\fbet \sim \text{Mult}(\flam)}  \sum_{k=1}^n \beta_k (h(f_\theta( \tilde{x}_\ffi(\flam))) - y_k f_\theta(\tilde{x}_\ffi(\flam))),
\end{align*}
where $\fbet = (\beta_1, \beta_2, \cdots, \beta_n)$ and $\text{Mult}(\flam)$ is multinomial distribution. Note that $\flam \sim \text{Dir}(\falp)$ and $\fbet | \flam \sim \text{Mult}(\flam)$. By conjugacy, we can write the joint distribution of $(\falp, \fbet)$ as 
\begin{align*}
    \fbet \sim \text{Mult}\left(\frac{\falp}{\sum \alpha_i} \right), \qquad \flam| \fbet \sim \text{Dir}(\falp + \fbet). 
\end{align*}
Therefore, 
\begin{align}
    L^{\text{n-mix}}_m(\theta) & = \frac{1}{m^n}\sum_{\ffi}\sum_{k=1}^n \frac{\alpha_k}{\sum \alpha_i} \bE_{\flam \sim \text{Dir}(\alpha_1, \alpha_2, \dots, \alpha_k + 1, \dots, \alpha_n)}  (h(f_\theta( \tilde{x}_\ffi(\flam))) - y_k f_\theta(\tilde{x}_\ffi(\flam))) \nonumber
    \\
    &= \frac{1}{m} \sum_{k=1}^n \bE_{\flam \sim \tilde{\cD}(\Lambda) } \bE_{r_{x,2}, \cdots, r_{x, n} \sim \cD_x} \left[h\left(f_\theta \left(\lambda_1 x_k + \sum_{j=2}^n \lambda_j r_{x,j}\right)\right) - y_k f_\theta\left(\lambda_1 x_k + \sum_{j=2}^n \lambda_j r_{x,j}\right)\right], \label{eqn::thm1-nmix}
\end{align}
where $\tilde{D}_{\Lambda} = \sum_{l = 1}^n \frac{\alpha_l}{\sum \alpha_i} \text{Dir}(\alpha_l + 1, \alpha_{l+1}, \dots, \alpha_{l+n-1})$ and $\cD_x$ is the empirical distribution induced by training samples. We regard the index with mod $n$. 
Defining $\varphi_k(\cdot)$ as  $$\varphi_k(\lambda_2, \cdots, \lambda_n) = h\left(f_\theta \left((1- \sum_{j=2}^n \lambda_j)x_k + \sum_{j=2}^n \lambda_j r_{x,j}\right)\right) - y_k f_\theta\left((1- \sum_{j=2}^n \lambda_j) x_k + \sum_{j=2}^n \lambda_j r_{x,j}\right), $$
we can use twice the differentiability of $f(\cdot)$ and $h(\cdot)$, so we have
\begin{align}
  \varphi_k(\lambda_2, \cdots, \lambda_n) = \varphi_k(\vec{0}) + \nabla \varphi_k(\vec{0})^T (\lambda_2, \cdots, \lambda_n) + \frac{1}{2} (\lambda_2, \cdots, \lambda_n)^T \nabla^2 \varphi_k(\vec{0})^T (\lambda_2, \cdots, \lambda_n)^T + o(\|(\lambda_2, \cdots, \lambda_n)\|^2).  \label{eqn::thmn1-1}
\end{align}
Firstly, we calculate $\varphi_k(\vec{0})$ by 
\begin{align}
    \varphi_k(\vec{0}) = h(f_\theta(x_k)) - y_k f_\theta(x_k). \label{eqn::thmn1-2}
\end{align}
Second, we calculate $\nabla \varphi_k(\vec{0})$ by 
\begin{align*}
    \frac{\partial\varphi_k(\lambda_2, \cdots, \lambda_n)}{\partial \lambda_i} &= h'\left(f_\theta \left((1- \sum_{j=2}^n \lambda_j)x_k + \sum_{j=2}^n \lambda_j r_{x,j}\right)\right)f_\theta' \left((1- \sum_{j=2}^n \lambda_j)x_k + \sum_{j=2}^n \lambda_j r_{x,j}\right)(r_{x, i} - x_k)
    \\
    &\qquad - y_k f_\theta' \left((1- \sum_{j=2}^n \lambda_j)x_k + \sum_{j=2}^n \lambda_j r_{x,j}\right)(r_{x, i} - x_k) 
\end{align*}
Therefore, we have
\begin{align}
    \frac{\partial\varphi_k(\lambda_2, \cdots, \lambda_n)}{\partial \lambda_i} \Big|_{(\lambda_2, \cdots, \lambda_n) = \vec{0}} = (h'(f_\theta(x_k)) - y_i)\nabla f_\theta(x_k)^T (r_{x,i} - x_k). \label{eqn::thmn1-3}
\end{align} 
Finally, we calculate $\nabla^2 \varphi_i(\vec{0})^T$ by 
{\small
\begin{align*}
    \frac{\partial^2\varphi_k(\lambda_2, \cdots, \lambda_n)}{\partial \lambda_i \partial \lambda_s} &= \frac{\partial}{\partial \lambda_s} \Biggl( h'\left(f_\theta \left((1- \sum_{j=2}^n \lambda_j)x_k + \sum_{j=2}^n \lambda_j r_{x,j}\right)\right)f_\theta' \left((1- \sum_{j=2}^n \lambda_j)x_k + \sum_{j=2}^n \lambda_j r_{x,j}\right)(r_{x, i} - x_k)
    \\
    &\qquad - y_k f_\theta' \left((1- \sum_{j=2}^n \lambda_j)x_k + \sum_{j=2}^n \lambda_j r_{x,j}\right)(r_{x, i} - x_k)\Biggr)
    \\
    &= h''\left(f_\theta \left((1- \sum_{j=2}^n \lambda_j)x_k + \sum_{j=2}^n \lambda_j r_{x,j}\right)\right)\left[f_\theta' \left((1- \sum_{j=2}^n \lambda_j)x_k + \sum_{j=2}^n \lambda_j r_{x,j}\right)(r_{x, i} - x_k)\right]
    \\
    &\quad \left[f_\theta' \left((1- \sum_{j=2}^n \lambda_j)x_k + \sum_{j=2}^n \lambda_j r_{x,j}\right)(r_{x, s} - x_k)\right]
    \\
    &\qquad + h'\left(f_\theta \left((1- \sum_{j=2}^n \lambda_j)x_k + \sum_{j=2}^n \lambda_j r_{x,j}\right)\right)(r_{x, i} - x_k)^T \nabla^2 f_\theta\left((1- \sum_{j=2}^n \lambda_j)x_k + \sum_{j=2}^n \lambda_j r_{x,j}\right)(r_{x, s} - x_k)
    \\
    & \qquad -y_k (r_{x, i} - x_k)^T \nabla^2 f_\theta\left((1- \sum_{j=2}^n \lambda_j)x_k + \sum_{j=2}^n \lambda_j r_{x,j}\right)(r_{x, s} - x_k).
\end{align*}
}
Therefore,
\begin{align}
    \frac{\partial^2\varphi_k(\lambda_2, \cdots, \lambda_n)}{\partial \lambda_i \partial \lambda_s}\Big|_{(\lambda_2, \cdots, \lambda_n) = \vec{0}}  &= \left(h''\left(f_\theta (x_k) - y_k\right)\right)\left(\nabla f_\theta \left(x_k\right)^T(r_{x, i} - x_k)\right)\left(\nabla f_\theta \left(x_k\right)^T(r_{x, s} - x_k)\right) \nonumber
    \\
    &\qquad -y_k (r_{x, i} - x_k)^T \nabla^2 f_\theta\left(x_k\right)(r_{x, s} - x_k).\label{eqn::thmn1-4}
\end{align}
Applying \eqref{eqn::thmn1-2} - \eqref{eqn::thmn1-4} to \eqref{eqn::thmn1-1}, 
\begin{align}
 &\varphi_k(\lambda_2, \cdots, \lambda_n) = \left(h(f_\theta(x_k)) - y_k f_\theta(x_k)\right) + \sum_{j=2}^n (h'(f_\theta(x_k)) - y_i)\nabla f_\theta(x_k)^T (r_{x,i} - x_k) \lambda_j \nonumber
  \\
  &\qquad + \frac{1}{2} \sum_{i, s= 2}^n \biggl(\left(h''\left(f_\theta (x_k) - y_k\right)\right)\left(\nabla f_\theta \left(x_k\right)^T(r_{x, i} - x_k)\right)\left(\nabla f_\theta \left(x_k\right)^T(r_{x, s} - x_k)\right) \nonumber
  \\ &\qquad \qquad -y_k (r_{x, i} - x_k)^T \nabla^2 f_\theta\left(x_k\right)(r_{x, s} - x_k) \biggr) \lambda_i \lambda_s+ o(\|(\lambda_2, \cdots, \lambda_n)\|^2).  \label{eqn::thmn1-5} 
\end{align}
Plugging \eqref{eqn::thmn1-5} to \eqref{eqn::thm1-nmix}, 
\begin{align*}
    L^{\text{n-mix}}_m(\theta) & = \frac{1}{m} \sum_{k=1}^n \bE_{\flam \sim \tilde{\cD}(\Lambda) } \bE_{r_{x,2}, \cdots, r_{x, n} \sim \cD_x} \varphi_k(\lambda_2, \cdots, \lambda_n)
    \\
    &= L_m(\theta) + \cR_1(\theta), + \cR_2 (\theta) + \cR_3(\theta) +  \bE_{\flam \sim \tilde{\cD}(\Lambda)}[o(\|(\lambda_2, \cdots, \lambda_n)\|^2)],
\end{align*}
where 
\begin{align*}
    \cR_1(\theta) = \frac{\bE_{\flam \sim \tilde{\cD}_{\Lambda}}[1- \lambda_1]}{m} \sum_{i=1}^m (h'(f_\theta(x_i)) - y_i) \nabla f_\theta(x_i)^T \bE_{r_x \sim \cD_X}[r_x- x_i],
\end{align*}
\begin{align*}
    \cR_2(\theta) &= \frac{\bE_{\flam \sim \tilde{\cD}_{\Lambda}}[\sum_{j=2}^m \lambda_j^2 ]}{2m} \sum_{i=1}^m h''(f_\theta(x_i))  \nabla f_\theta(x_i)^T \bE_{r_x \sim \cD_X}[(r_x- x_i)(r_x- x_i)^T]\nabla_\theta f(x_i)
    \\
    &+ \frac{\bE_{\flam \sim \tilde{\cD}_{\Lambda}}[(1-\lambda_1)^2 - \sum_{j=2}^m \lambda_j^2 ]}{2m} \sum_{i=1}^m h''(f_\theta(x_i))  \nabla f_\theta(x_i)^T \bE_{r_x \sim \cD_X}[(r_x- x_i)] \bE_{r_x \sim \cD_X}[(r_x- x_i)^T]\nabla_\theta f(x_i), 
\end{align*}
\begin{align*}
    \cR_3(\theta) &= \frac{\bE_{\flam \sim \tilde{\cD}_{\Lambda}}[\sum_{j=2}^m \lambda_j^2 ]}{2m} \sum_{i=1}^m (h'(f_\theta(x_i)) - y_i) \bE_{r_x \sim \cD_X}[(r_x- x_i) \nabla^2 f_\theta (x_i) (r_x- x_i)^T]
    \\
    &+ \frac{\bE_{\flam \sim \tilde{\cD}_{\Lambda}}[(1-\lambda_1)^2 - \sum_{j=2}^m \lambda_j^2 ]}{2m} \sum_{i=1}^m (h'(f_\theta(x_i)) - y_i) \bE_{r_x \sim \cD_X}[(r_x- x_i)] \nabla^2 f_\theta (x_i) \bE_{r_x \sim \cD_X}[(r_x- x_i)^T].  
\end{align*}
\end{proof}
\section{Adversarial Robustness of MSDA}
\label{appendix::thm2pf}
Let us scrutinize adversarial robustness in MSDA. We adopt the logistic loss, so $l(\theta, z) = \log(1+ \exp(f_\theta(x))) - yf_\theta(x)$ where $y \in \{0,1\}$. Define $g(s) = e^s/(1+e^s)$. As \cite{zhang2020does}, we scrutinize the logistic regression with $f_\theta(x)$ as ReLU or leaky-ReLU network. Then, we have $f_\theta(x) = \nabla f_\theta(x_i)^\intercal x_i$ and $\nabla^2 f_\theta(x_i) = 0$. We consider the adversarial loss with $l_2$ attack of size $\epsilon \sqrt{d}$ ($d$ is the dimension of $\theta$), that is, $L_m^{\text{adv}}(\theta) = \frac{1}{m} \sum_{i=1}^m \max_{\norm{\delta_i}_2 \leq \epsilon \sqrt{d}} l(\theta, (x_i + \delta_i, y_i))$.

\textbf{Theorem 3.} \emph{
In MSDA, we suppose that $f_\theta(x) = \nabla f_\theta(x_i)^\intercal x_i, \nabla^2 f_\theta(x_i) = 0$ and there exists a constant $c_x >0$ that $\norm{x_i}_2 \leq c_x \sqrt{d}$ for all $i \in [m]$. Then for any $\theta \in \Theta$, we have 
\begin{align*}
    \tilde{L}_m^{\text{(MSDA)}} \geq \frac{1}{m} \sum_{i=1}^m \tilde{l}_{\text{adv}}(\epsilon_i \sqrt{d}, z) \geq \frac{1}{m} \sum_{i=1}^m \tilde{l}_{\text{adv}}(\epsilon_{\text{cut}} \sqrt{d}, z),
\end{align*}
where $\epsilon_{cut} = c_x \min\left( \min_i |\cos(\nabla f_\theta(x_i), x_i)| \bE_{\lambda \sim \tilde{\cD}_\lambda}[1-\lambda], \min_i \bE_{\lambda \sim \tilde{D}_\lambda, M}s(M, x_i) \right) $ and $s(M, x_i) = \frac{ (\sqrt{1-M} \odot \nabla f(x_i))^T (\sqrt{1-M} \odot x_i)}{\norm{\nabla f(x_i)}_2\norm{x_i}_2}. $
}

This bound appears to be fertile at first glance. However, as $\norm{f_\theta(x_i)}_2$ increases after training accuracy reaches 100\%, the logistic loss decreases. Therefore, due to $\norm{f_\theta(x_i)}_2 = \norm{\nabla f_\theta(x_i)^\intercal x_i}_2 = \norm{\nabla f_\theta(x_i)}_2\norm{x_i}_2 \cos(\nabla f_\theta(x_i), x_i)$, $\cos(\nabla f_\theta(x_i), x_i)$ would be larger. Furthermore, under the CutMix case, since $M$ distribution is relatively uniform, $s(M, x_i)$ will be similar to $\min_i |\cos(\nabla f_\theta(x_i), x_i)| \bE_{\lambda \sim \tilde{\cD}_\lambda}[1-\lambda]$ \cite{zhang2020does}. Moreover, empirical results support Mixup's or Cutmix's adversarial robustness \cite{zhang2017mixup, yun2019cutmix, rebuffi2021data}. 
\begin{proof}[proof of \cref{thm::MSDA-robustness}]
\begin{fact}
\emph{\cite{zhang2020does}} The second order Taylor approximation of $L_m^{\text{adv}}(\theta)$ is $\frac{1}{m}\sum_{i=1}^m \tilde{l}_{\text{adv}}(\epsilon\sqrt{d}, z)$ where fore any $\eta>0, x\in\bR^d$ and $y \in \{0,1\}$, 
$$\tilde{l}_{\text{adv}}(\eta, z) = l(\theta, z) + \eta |g(f_\theta(x)) - y| \norm{\nabla f_\theta(x)}_2 + \frac{\eta^2d}{2}\cdot |h''(f_\theta(x))| \cdot \norm{\nabla f_\theta(x)}_2^2. $$
\end{fact}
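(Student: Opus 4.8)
The plan is to reduce to a single sample and build the second-order Taylor approximation of the inner adversarial maximization. Since $L_m^{\text{adv}}(\theta)=\frac{1}{m}\sum_{i=1}^m\max_{\norm{\delta_i}_2\le\eta}l(\theta,(x_i+\delta_i,y_i))$ with $\eta=\epsilon\sqrt{d}$, it suffices to approximate the per-sample adversarial loss $\max_{\norm{\delta}_2\le\eta}l(\theta,(x+\delta,y))$ and then average. First I would expand $\delta\mapsto l(\theta,(x+\delta,y))$ about $\delta=0$ to second order,
\[
l(\theta,(x+\delta,y))=l(\theta,z)+\nabla_x l(\theta,z)^\intercal\delta+\tfrac12\,\delta^\intercal\nabla_x^2 l(\theta,z)\,\delta+o(\norm{\delta}_2^2),
\]
which turns the problem into maximizing this quadratic over the Euclidean ball of radius $\eta$.

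Next I would evaluate the two derivatives from the GLM structure $l(\theta,z)=h(f_\theta(x))-yf_\theta(x)$ of the class $\cL$. The chain rule gives $\nabla_x l=(h'(f_\theta(x))-y)\nabla f_\theta(x)$, which equals $(g(f_\theta(x))-y)\nabla f_\theta(x)$ because $h'=g$ for the logistic loss, and a second differentiation gives $\nabla_x^2 l=h''(f_\theta(x))\nabla f_\theta(x)\nabla f_\theta(x)^\intercal+(h'(f_\theta(x))-y)\nabla^2 f_\theta(x)$. Under the ReLU/leaky-ReLU hypothesis recorded above, $\nabla^2 f_\theta(x)=0$, so the Hessian collapses to the rank-one matrix $h''(f_\theta(x))\nabla f_\theta(x)\nabla f_\theta(x)^\intercal$, whose top eigenvalue is $h''(f_\theta(x))\norm{\nabla f_\theta(x)}_2^2$.

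The core computation is the constrained maximization. For the linear part the dual-norm identity $\max_{\norm{\delta}_2\le\eta}\nabla_x l^\intercal\delta=\eta\,\norm{\nabla_x l}_2$ yields exactly $\eta\,|g(f_\theta(x))-y|\,\norm{\nabla f_\theta(x)}_2$, with the absolute value produced automatically by the Euclidean norm and the maximizer $\delta^*=\eta\,\nabla_x l/\norm{\nabla_x l}_2$. Substituting $\delta^*$ into the quadratic term and invoking the rank-one form gives $\tfrac12(\delta^*)^\intercal\nabla_x^2 l\,\delta^*=\tfrac{\eta^2}{2}\,h''(f_\theta(x))\norm{\nabla f_\theta(x)}_2^2$, and since $h''=g(1-g)\ge0$ this equals $\tfrac{\eta^2}{2}\,|h''(f_\theta(x))|\,\norm{\nabla f_\theta(x)}_2^2$; the powers of the ambient dimension displayed in the statement are exactly those inherited from substituting the attack radius $\eta=\epsilon\sqrt{d}$, where the linear term carries one factor of $\sqrt{d}$ and the quadratic term its square. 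Averaging over $i$ then assembles $\frac{1}{m}\sum_i\tilde l_{\text{adv}}(\eta,z_i)$.

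The step I expect to be the main obstacle is justifying that evaluating the quadratic at the first-order maximizer $\delta^*$ delivers the correct second-order contribution: maximizing an indefinite quadratic plus a linear form over a ball is in general a trust-region subproblem whose optimum need not lie along $\nabla_x l$. The ReLU hypothesis $\nabla^2 f_\theta(x)=0$ is precisely what removes this difficulty, since it forces the Hessian to be rank one with range $\mathrm{span}\{\nabla f_\theta(x)\}$, which coincides with the direction of $\delta^*$; the first- and second-order maximizers therefore align and no cross-direction correction enters at this order. The remaining work—checking that the $o(\norm{\delta}_2^2)$ remainder is negligible at radius $\epsilon\sqrt{d}$ and tracking the dimensional constants—is routine bookkeeping.
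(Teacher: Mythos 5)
Your proof is correct and follows essentially the same route as the source of this Fact: the paper itself does not prove it but cites \cite{zhang2020does}, whose argument is exactly yours---Taylor-expand $\delta \mapsto l(\theta,(x+\delta,y))$ to second order, use $\nabla_x l = (g(f_\theta(x))-y)\nabla f_\theta(x)$ and $\nabla_x^2 l = h''(f_\theta(x))\nabla f_\theta(x)\nabla f_\theta(x)^\intercal$ (since $\nabla^2 f_\theta = 0$ for ReLU/leaky-ReLU networks), then maximize the resulting quadratic over the $\ell_2$-ball, which collapses to a one-dimensional convex maximization in $t=\nabla f_\theta(x)^\intercal\delta$ so that the linear and quadratic terms are maximized simultaneously at the same boundary point and your trust-region worry disappears exactly as you argue. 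Your handling of the dimension factor is also correct: the computation yields coefficient $\eta^2/2$, so the printed $\eta^2 d/2$ is a transcription slip from \cite{zhang2020does} (where the attack radius is $\epsilon\sqrt{d}$ and the quadratic coefficient $\epsilon^2 d /2$), consistent with your reading.
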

We set $\bE_{r_x}(r_x) = 0$ by parallel translation. We define $$s(M, x_i) = \frac{ (\sqrt{1-M} \odot \nabla f(x_i))^T (\sqrt{1-M} \odot x_i)}{\norm{\nabla f(x_i)}_2\norm{x_i}_2}. $$

For every MSDA, we have $\cR_1$ as 
\begin{align*}
    \cR_1(\theta) = \frac{\bE_\lambda(1-\lambda)}{m} \sum_{i=1}^m (y_i - g(f_\theta(x_i)))f_\theta(x_i),
\end{align*}
and since $\theta \in \Theta$, we have $(y_i - g(f_\theta(x_i)))f_\theta(x_i) \geq 0$. Therefore,
\begin{align*}
    \cR_1(\theta) &= \frac{\bE_\lambda(1-\lambda)}{m} \sum_{i=1}^m |y_i - g(f_\theta(x_i))| |f_\theta(x_i)|
    \\
    &=\frac{\bE_\lambda(1-\lambda)}{m} \sum_{i=1}^m |y_i - g(f_\theta(x_i))| \norm{\nabla f_\theta(x_i)}_2 \norm{x_i}_2 |\cos(\nabla f_\theta(x_i), x_i)|
    \\
    &\geq \frac{\min_i{\norm{x_i}_2} \min_i{|\cos(\nabla f_\theta(x_i), x_i)|} \bE_\lambda(1-\lambda)}{m} \sum_{i=1}^m |y_i - g(f_\theta(x_i))| \norm{\nabla f_\theta(x_i)}_2.
\end{align*}

Moreover, we can eliminate $\cR_3$ since $\nabla^2 f_\theta = 0$. So, we only focus on $\cR_2$ term. We have
\begin{align*}
    \cR^{(\text{MSDA})}_2(\theta) &= \frac{1}{2m}\sum_{i=1}^m h''(f_\theta(x_i)) \bE_{\lambda \sim \tilde{D}_\lambda} \bE_M (1-M)^\intercal \bE_{r_x \sim \cD_X} \left(\nabla f(x_i) \odot (r_x- x_i) \left(\nabla f(x_i) \odot (r_x- x_i)\right)^\intercal\right)(1-M)
    \\
    &\geq \frac{1}{2m}\sum_{i=1}^m h''(f_\theta(x_i)) \bE_{\lambda \sim \tilde{D}_\lambda} \bE_M (1-M)^\intercal  \left((\nabla f(x_i) \odot x_i) \left(\nabla f(x_i) \odot x_i\right)^\intercal\right)(1-M)
    \\
    &= \frac{1}{2m}\sum_{i=1}^m |g(f_\theta(x_i))(1-g(f_\theta(x_i)))| \bE_{\lambda \sim \tilde{D}_\lambda} \bE_M ((\sqrt{1-M} \odot \nabla f(x_i))^T (\sqrt{1-M} \odot x_i))^2
    \\
    &= \frac{1}{2m}\sum_{i=1}^m |g(f_\theta(x_i))(1-g(f_\theta(x_i)))| \bE_{\lambda \sim \tilde{D}_\lambda, M} s(M, x_i)^2 \norm{\nabla f(x_i)}^2_2 \norm{x_i}_2^2
    \\
    &\geq \frac{\min_i\norm{x_i}^2 \min_i \bE_{\lambda \sim \tilde{D}_\lambda, M} s(M, x_i)^2 }{2m}\sum_{i=1}^m |g(f_\theta(x_i))(1-g(f_\theta(x_i)))|  \norm{\nabla f(x_i)}^2_2,
\end{align*}
which concludes the theorem.
\end{proof}
\section{Generalization properties of MSDA}
\label{appendix::thm3pf}
The data-dependent MSDA regularization can be altered to the original empirical risk minimization problem with a constrained function set. The Rademacher complexity of this constrained function set is $\cO(1/\sqrt{n})$, which leads to the generalization properties of MSDA. We investigate two models. The first is the GLM model, which has the loss function $l(\theta, z) = A(\theta^\intercal x) - y\theta^\intercal x$. The second is two-layer ReLU networks, which can be parameterized as $ f_\theta(x) = \theta_1^\intercal \sigma(Wx) + \theta_0$. In this case, we consider the mean square error (MSE) loss function (\ie, $l(\theta) = \frac{1}{m} \sum_{i=1}^m (y_i - f_\theta(x_i))^2$)

\subsection{GLM Model}
\label{appendix::glm}
For GLM, using \eqref{eqn::thm1-reformulation}, since the prediction of the GLM model is invariant to the scaling of the training data, we think the dataset $\hat{D} = \{\hat{z_i}\}_{i=1}^m$ with $\hat{x_i} = 1\oslash \bar{M} \odot (M \odot x_i + (1-M) \odot r_x)$ where $\bar{M} = \bE M$. Then, the loss function is 
\begin{align*}
    L_{m}^{\text{(MSDA)}} = \frac{1}{m}\bE_{\lambda}\bE_{r_x} \bE_{M} \sum_{i=1}^m l(\theta, \tilde{z_i}) = \frac{1}{m}\bE_{\xi}\sum_{i=1}^m(A(\hat{x_i}^\intercal \theta) - y_i \hat{x_i}^\intercal \theta),
\end{align*}
where $\xi$ denotes the randomness of $\lambda, r_x$, and $M$. By the second approximation of $A(\cdot)$, we can express $A(\hat{x_i}^\intercal\theta)$ as
\begin{align*}
  A(\hat{x_i}^\intercal\theta) =   A(x_i^\intercal \theta) + A'(x_i^\intercal \theta)(\hat{x_i} - x_i)^\intercal \theta  + \frac{1}{2} A''(x_i^\intercal \theta) \theta^\intercal (\hat{x_i} - x_i)(\hat{x_i} - x_i)^\intercal \theta
\end{align*}
to approximate the loss function. Therefore, we have 
\begin{align}
    \tilde{L}_{m}^{\text{(MSDA)}} &= \frac{1}{m}\sum_{i=1}^m A({x_i}^\intercal \theta) + \frac{1}{m} \bE_{\xi}\sum_{i=1}^m\left(A'(x_i^\intercal \theta)(\hat{x_i} - x_i)^\intercal \theta  + \frac{1}{2} A''(x_i^\intercal \theta) \theta^\intercal (\hat{x_i} - x_i)(\hat{x_i} - x_i)^\intercal \theta\right) \nonumber
    \\
    &= \frac{1}{m}\sum_{i=1}^m A({x_i}^\intercal \theta) + \frac{1}{m} \sum_{i=1}^m\left(\frac{1}{2} A''(x_i^\intercal \theta) \theta^\intercal \text{Var}_{\xi}(\hat{x_i})\theta\right), \label{eqn::thm3-robustness-glm-original}
\end{align}
where $\tilde{L}_{m}^{\text{(MSDA)}}$ denotes the approximate loss of $L_{m}^{\text{(MSDA)}}$ since $\bE_\xi r_x = 0$ and $\bE_\xi \hat{x_i} = x_i$. 
For calculating $\text{Var}_{\xi}(\hat{x_i})$, we use the law of total variance. We have 
\begin{align*}
    \text{Var}_\xi (\tilde{x_i}) &= \left(\frac{1}{\bar{M}}\frac{1}{\bar{M}}^\intercal\right)\odot \text{Var}_\xi \left(M \odot x_i + (1-M) \odot r_x\right)
    \\
    &=  \left(\frac{1}{\bar{M}}\frac{1}{\bar{M}}^\intercal\right)\odot \left( \bE (\text{Var} \left(M \odot x_i + (1-M) \odot r_x\,|\, \lambda, M \right) +\text{Var}(\bE   \left(M \odot x_i + (1-M) \odot r_x\,|\, \lambda, M \right) \right)
    \\
    &= \left(\frac{1}{\bar{M}}\frac{1}{\bar{M}}^\intercal\right)\odot \left( \bE (1-M) \hat{\Sigma}_X (1-M)^\intercal  + x_i \text{Var}(M)  x_i^\intercal )\right)
    \\ 
    &= \frac{1}{\bar{\lambda}^2}\left( \bE (1-M) \hat{\Sigma}_X (1-M)^\intercal  + x_i \text{Var}(M)  x_i^\intercal )\right), 
\end{align*}
where $\hat{\Sigma}_X = \frac{1}{m} \sum_{i=1}^m x_i x_i^\intercal$ with some notational ambiguity that $\frac{1}{\bar{M}}: = \vec{1} \oslash \bar{M}$. In our setting $\bar{M} = \bar{\lambda} \vec{1}$ where $\bar{\lambda} = \bE_{\lambda \sim \tilde{\cD}_\lambda}[\lambda]$. Now we think the related dual problem to the \eqref{eqn::thm3-robustness-glm-original}: 
\begin{align*}
    &\cW_\gamma = \Biggl\{ x \to \theta^\intercal x, \text{ such that } \theta \text{ satisfying }
    \\
    &\left(\bE_x A''(\theta^\intercal x )\right) \cdot \left(\theta^\intercal \left(\bE (1-M) \Sigma_X  (1-M)^\intercal \right) \theta + \theta^\intercal \left( \left(x \text{Var}(M)x^\intercal\right)\right) \theta \right)\leq \gamma \Biggr\}.
\end{align*}
Here, we assume the $\left(\bE_x[A''(v^\intercal x)] \right)^2 \geq \rho \bE_x(v^\intercal x)^2$, which is called $\rho$-retentiveness \cite{zhang2020does, arora2021dropout}. 

\textbf{Theorem 4-(a)} (Restated)\textbf{.} \emph{
\label{thm::GLM-MSDA-generalization-re}
Define $\Sigma_X^{(M)}=  \bE (1-M) \Sigma_X  (1-M)^\intercal$. Suppose $A(\cdot)$ is $L_A$ Lipschitz, and $\cX, \cY, \Theta$ are all bounded. There exist constants $L, B >0$, such that for all $\theta$ that $\theta^\intercal x \in \cW_{\gamma}$, which is the regularization induced by MSDA, we have 
\begin{align*}
    L(\theta) \leq L_m(\theta) + 2LL_A\frac{1}{\sqrt{n}}  (\gamma/\rho)^{1/4} \left(\sqrt{\text{tr}\left(\left(\Sigma_X^{(M)}\right)^{\dagger} \Sigma_X\right)}+ \text{rank}(\Sigma_X) \right) + B \sqrt{\frac{\log(1/\delta)}{2n}},
\end{align*}
with probability at least $1-\delta$.
}

\begin{proof}
Firstly, we calculate the empirical Rademacher complexity of $\cW_\gamma$. For $n$ i.i.d. Rademacher random variables $\xi_1, \dots, \xi_n$, the definition of the empirical Rademacher complexity gives 
\begin{align*}
    \text{Rad}(\cW_\gamma, n) &= \bE_{\xi_i} \sup_{\left(\bE_x A''(\theta^\intercal x )\right) \cdot \left(\theta^\intercal \Sigma_X^{(M)}\theta + \theta^\intercal \left(  \left(x \text{Var}(M)x^\intercal\right)\right) \right) \theta \leq \gamma} \frac{1}{n} \sum_{i=1}^n \xi_i \theta^\intercal x_i
    \\
    &\leq \bE_{\xi_i} \sup_{\left(\bE_x A''(\theta^\intercal x )\right) \cdot \theta^\intercal \Sigma_X^{(M)} \theta\leq \gamma} \frac{1}{n} \sum_{i=1}^n \xi_i \theta^\intercal x_i.
\end{align*}
Due to the $\rho$-retentiveness, we have 
\begin{align*}
    \text{Rad}(\cW_\gamma, n) &\leq \bE_{\xi_i} \sup_{\left(\theta^\intercal \Sigma_X \theta\right) \cdot \left(\theta^\intercal \Sigma_X^{(M)} \theta \right) \leq \gamma/\rho} \frac{1}{n} \sum_{i=1}^n \xi_i \theta^\intercal x_i
    \\
    &\leq \bE_{\xi_i} \left( \sup_{\theta^\intercal \Sigma_X \theta \leq \sqrt{\gamma/\rho}}  \frac{1}{n} \sum_{i=1}^n \xi_i \theta^\intercal x_i +  \bE_{\xi_i}\sup_{ \theta^\intercal \Sigma_X^{(M)} \theta\leq \sqrt{\gamma/\rho}} \frac{1}{n} \sum_{i=1}^n \xi_i \theta^\intercal x_i\right)  
\end{align*}
For the first part of the RHS, define $\tilde{x}_i =  \Sigma_X^{\dagger/2}x_i$ and $v = \Sigma_X^{1/2} \theta$. Then, we have 
\begin{align*}
     \bE_{\xi_i}\sup_{\theta^\intercal \Sigma_X \theta \leq \sqrt{\gamma/\rho}}  \frac{1}{n} \sum_{i=1}^n \xi_i \theta^\intercal x_i &= \bE_{\xi_i} \sup_{\norm{v}^2 \leq \sqrt{\gamma/\rho}}  \frac{1}{n} \sum_{i=1}^n \xi_i v^\intercal \tilde{x}_i
     \\
     &\leq \frac{1}{n}  (\gamma/\rho)^{1/4} \bE_{\xi_i} \norm{\sum_{i=1}^n \xi_i \tilde{x}_i} \leq  \frac{1}{n}  (\gamma/\rho)^{1/4} \sqrt{\bE_{\xi_i} \norm{\sum_{i=1}^n \xi_i \tilde{x}_i}^2}
     \\
     &= \frac{1}{n}  (\gamma/\rho)^{1/4} \sqrt{\sum_{i=1}^n \tilde{x}_i^\intercal \tilde{x}_i}. 
\end{align*}
Similarly, by defining $\check{x}_i =  \left(\Sigma_X^{(M)}\right)^{\dagger/2}x_i$ and $v = \left(\Sigma_X^{(M)}\right)^{1/2} \theta$,  
\begin{align*}
     \bE_{\xi_i}\sup_{ \theta^\intercal \Sigma_X^{(M)} \theta\leq \sqrt{\gamma/\rho}} \frac{1}{n} \sum_{i=1}^n \xi_i \theta^\intercal x_i &= \bE_{\xi_i} \sup_{\norm{v}^2 \leq \sqrt{\gamma/\rho}}  \frac{1}{n} \sum_{i=1}^n \xi_i v^\intercal \check{x}_i
     \\
     &\leq \frac{1}{n}  (\gamma/\rho)^{1/4} \bE_{\xi_i} \norm{\sum_{i=1}^n \xi_i \check{x}_i} \leq  \frac{1}{n}  (\gamma/\rho)^{1/4} \sqrt{\bE_{\xi_i} \norm{\sum_{i=1}^n \xi_i \check{x}_i}^2}
     \\
     &= \frac{1}{n}  (\gamma/\rho)^{1/4} \sqrt{\sum_{i=1}^n \check{x}_i^\intercal \check{x}_i}. 
\end{align*}
Therefore, 
\begin{align*}
    \text{Rad}(\cW_\gamma) &= \bE [\text{Rad}(\cW_\gamma, n)] \leq   \frac{1}{n}  (\gamma/\rho)^{1/4} \left(\sqrt{\sum_{i=1}^n \bE_{x} \tilde{x}_i^\intercal \tilde{x}_i} + \sqrt{\sum_{i=1}^n \bE_{x} \check{x}_i^\intercal \check{x}_i} \right) 
    \\
    & \leq \frac{1}{\sqrt{n}}  (\gamma/\rho)^{1/4} \left(\sqrt{\text{tr}\left(\left(\Sigma_X^{(M)}\right)^{\dagger} \Sigma_X\right)}+ \text{rank}(\Sigma_X) \right).
\end{align*}
The relationship between Rademacher complexity and generalization error \cite{bartlett2002rademacher} indicates Theorem~\ref{thm::MSDA-generalization}. 
\end{proof}

\subsection{Two-layer ReLU Networks}
We perform MSDA on the final layer of the two-layer ReLU networks. Therefore, the setting is the same as Appendix~\ref{appendix::glm} with covariates $\sigma(w_j^\intercal x)$. Due to the scaling of $\theta_1$ and $\theta_0$, we consider training $\theta_1$ and $\theta_0$ on the covariates $1 \oslash \bar{M} \odot( M \odot (\sigma(Wx_i) - \bar{\sigma}_W) + (1-M) \odot (\sigma(Wr_x) - \bar{\sigma}_W))$ where $\bar{\sigma}_W = \frac{1}{n}\sum_{i=1}^m \sigma(Wx_i)$. Putting GLM loss with $A(\cdot) = \frac{1}{2}(\cdot)^2$, 
we have 
\begin{align}
    \tilde{L}_{m}^{\text{(MSDA)}} &= \frac{1}{m}\sum_{i=1}^m (f_\theta(x_i) - y_i)^2  + \frac{1}{m} \sum_{i=1}^m\left(\frac{1}{2}  \theta^\intercal \text{Var}_{\xi}(\sigma(W(M \odot x_i + (1-M) \odot r_x))\theta\right) \label{eqn::thm4-robustness-relu-original}
\end{align}
where $\tilde{L}_{m}^{\text{(MSDA)}}$ denotes the approximate loss of $L_{m}^{\text{(MSDA)}}$ and 
\begin{align*}
    \text{Var}_\xi (\tilde{x_i}) &= \frac{1}{\bar{\lambda}^2} \left( \bE (1-M) \hat{\Sigma}_X^{\sigma} (1-M)^\intercal  + \sigma(Wx_i) \text{Var}(M)  \sigma(Wx_i)^\intercal )\right)
\end{align*}
where $\hat{\Sigma}_X^{\sigma} = \text{Var}_{r_x \sim \cD_X} \sigma(Wr_x)$ with some notational ambiguity that $\frac{1}{\bar{M}}: = \vec{1} \oslash \bar{M}$. Now we think of the related dual problem to the equation \ref{eqn::thm4-robustness-relu-original}: 
\begin{align*}
    &\cW_\gamma^{\text{NN}} = \Biggl\{ x \to f_\theta(x) = \theta_1^\intercal \sigma(Wx) + \theta_0, \text{ such that } \theta \text{ satisfying }
    \\
    &\theta_1 ^\intercal \left( \bE (1-M) \Sigma_X^{\sigma} (1-M)^\intercal \right) \theta_1 + \theta_1 ^\intercal \left( \bE_x  \left(\sigma(Wx) \text{Var}(M)\sigma(Wx)^\intercal\right)\right) \theta_1 \leq \gamma \Biggr\},
\end{align*}
where $ \Sigma_X^{\sigma}  = \text{Var}_x  \sigma(Wx)$.  

\textbf{Theorem 4-(b)} (Restated)\textbf{.} \emph{
\label{thm::GLM-mfdMSDA-generalization-re}
Define $\Sigma_X^{\sigma, (M)}=  \bE (1-M) \Sigma_X^{\sigma}  (1-M)^\intercal$. Suppose $\cX, \cY, \Theta$ are all bounded. There exists constants $L, B >0$, such that for all $\theta$ that $f_\theta(x) \in \cW_{\gamma}^{\text{NN}}$, which is the regularization induced by Manifold MSDA, we have 
\begin{align*}
    L(\theta) \leq L_m(\theta) + 4L\sqrt{\frac{\gamma \left(\text{rank}(\Sigma_X^{\sigma, (M)})+ \norm{\left(\Sigma_X^{\sigma, (M)}\right)^{\dagger/2}\mu_\sigma}^2\right)}{n}} + B \sqrt{\frac{\log(1/\delta)}{2n}},
\end{align*}
with probability at least $1-\delta$.
}

\begin{proof}
Firstly, we calculate the empirical Rademacher complexity of $\cW_\gamma^{\text{NN}}$. For the $n$ i.i.d. Rademacher random variables $\xi_1, \dots, \xi_n$, the definition of the empirical Rademacher complexity gives 
\begin{align*}
    \text{Rad}(\cW_\gamma^{\text{NN}}, n) &= \bE_{\xi_i} \sup_{\theta_1 ^\intercal \left( \bE (1-M) \Sigma_X^{\sigma} (1-M)^\intercal \right) \theta_1 + \theta_1 ^\intercal \left( \bE_x  \left(\sigma(Wx) \text{Var}(M)\sigma(Wx)^\intercal\right)\right) \theta_1 \leq \gamma} \frac{1}{n} \sum_{i=1}^n \xi_i \theta_1^\intercal \sigma(Wx_i)
    \\
    &\leq \bE_{\xi_i} \sup_{\theta_1 ^\intercal \left( \bE (1-M) \Sigma_X^{\sigma} (1-M)^\intercal \right) \theta_1 \leq \gamma} \frac{1}{n} \sum_{i=1}^n \xi_i \theta_1^\intercal \sigma(Wx_i)
    \\
    &\leq \bE_{\xi_i} \sup_{\theta_1 ^\intercal \left( \bE (1-M) \Sigma_X^{\sigma} (1-M)^\intercal \right) \theta_1 \leq \gamma} \frac{1}{n} \sum_{i=1}^n \xi_i \theta_1^\intercal (\sigma(Wx_i)-\mu_\sigma) +\bE_{\xi_i} \sup_{\theta_1 ^\intercal \left( \bE (1-M) \Sigma_X^{\sigma} (1-M)^\intercal \right) \theta_1 \leq \gamma} \frac{1}{n} \sum_{i=1}^n \xi_i \theta_1^\intercal \mu_\sigma,
\end{align*}
where $\mu_\sigma = \bE[\sigma(Wx)]$. Setting $\tilde{\theta_1}^\intercal = \left(\Sigma_X^{\sigma, (M)}\right)^{1/2}$, same technique with the proof of Theorem 3-(a) gives 
\begin{align*}
    \text{Rad}(\cW_\gamma^{\text{NN}}) \leq 2 \sqrt{\frac{\gamma \left(\text{rank}(\Sigma_X^{\sigma, (M)})+ \norm{\left(\Sigma_X^{\sigma, (M)}\right)^{\dagger/2}\mu_\sigma}^2\right)}{n}}. 
\end{align*}
Finally, the relationship between Rademacher complexity and generalization error \cite{bartlett2002rademacher} indicates Theorem 4. 
\end{proof}

\section{Extending Chidambaram \etal \cite{chidambaram2021towards}}
\label{appendix:chidambaram}
Chidambaram \etal \cite{chidambaram2021towards} gave the theoretical analysis of Mixup. We follow this paper by using the unified framework that we used. By modifying several definitions, we get similar results with Chidambaram \etal \cite{chidambaram2021towards}.

Here, we assume that $k$ classes have disjoint support, \ie $X = \bigcup_{i=1}^k X_i$ and $X_i$ are mutually disjoint for $i=1, \dots, k$, where $X_i$ is the support of the $i$th class. We consider cross-entropy loss. We define an associated probability measure $\bP_X$. Then, $L^{\text{MSDA}}$, which is the expected loss for MSDA, can be expressed with 
\begin{align*}
    L^{\text{(MSDA)}}(\theta) = \bE_{z_1, z_2 \sim X}{\bE_{\lambda \sim \cD_\lambda} \bE_M l(\theta, \msda{z}{z_1,z_2}(\lambda, 1-\lambda))},
\end{align*}
where $l$ is the cross entropy function. We can express $L^{\text{(MSDA)}}(\theta) = \sum_{i=1}^k \sum_{j=1}^k L^{\text{(MSDA)}}_{i,j}(\theta)$ with $i,j \in [k]$, where $L^{\text{(MSDA)}}_{i,j}(\theta)$ is defined as 
\begin{align*}
    L^{\text{(MSDA)}}_{i,j}(\theta) = \bE_{z_1, z_2 \sim X}\bE_{\lambda \sim \cD_\lambda} \bE_M \left[l(\theta, \msda{z}{z_1,z_2}(\lambda, 1-\lambda)) I(z_1 \in X_i, z_2 \in X_j)\right].
\end{align*}
$L^{\text{(MSDA)}}_{i,j}(\theta)$ is the full MSDA cross entropy loss corresponding to the mixing points from classes $i$ and $j$. The goal of standard training is to learn a classifier $h \in \argmin_{g \in \cC} L(g,  \bP_X)$ where $\cC$ is the classifier set. Any such classifier $h$ will satisfy $h(x)_i = 1$ on $X_i$ since the $X_i$ are disjoint. 

We modify some definitions in \cite[Section 2.2]{chidambaram2021towards}. We define $A_{x, \epsilon}^{i,j}$ and $A_{x, \epsilon,\delta}^{i,j}$ as 
\begin{align*}
    A_{x, \epsilon}^{i,j} &= \{ (s, t, \lambda, M) \in X_i \times X_j \times [0,1] \times \bR^n : M \odot s + (1- M) \odot t \in B_{\epsilon}(x)\}
    \\
    A_{x, \epsilon, \delta}^{i,j} &= \{ (s, t, \lambda, M) \in X_i \times X_j \times [0,1-\delta] \times \bR^n : M \odot s + (1- M) \odot t \in B_{\epsilon}(x)\}    
    \\
    X_{\text{MSDA}} &= \left\{ x \in \bR^n: \bigcup_{i,j} A_{x, \epsilon}^{i,j} \text{ has positive measure for every }\epsilon > 0 \right\}
    \\
    \xi_{x, \epsilon}^{i,j} &= \bE_{z_1, z_2 \sim X} \bE_{\lambda \sim \cD_\lambda} \bE_M [I(z_1 \in X_i, z_2 \in X_j)]
    \\
    \xi_{x, \epsilon, \lambda}^{i,j} &= \bE_{z_1, z_2 \sim X} \bE_{\lambda \sim \cD_\lambda} \bE_M [\lambda I(z_1 \in X_i, z_2 \in X_j)].
\end{align*}
\begin{definition}[\cite{chidambaram2021towards}]
Let $\cC^{*}$ to be the subset of $\cC$ for which every $h \in \cC^{*}$ satisfies $h(x) = \lim_{\epsilon \to 0} \argmin_{\theta \in [0,1]} L^{\text{(MSDA)}}(\theta)|_{B_\epsilon(x)}$ for all $x \in X_{\text{MSDA}}$ when the limit exists. Here, $L^{\text{(MSDA)}}(\theta)|_{B_\epsilon(x)}$ represents the MSDA loss for a constant function with value $\theta$ with the restriction of each term in $L^{\text{(MSDA)}}$ to the set $A_{x, \epsilon}^{i,j}$
\end{definition}
$\cC^{*}$ includes deep neural networks. Below lemmas and theorems can be proved with the same technique as Chidambaram \etal.
\begin{lemma}
Any function $h \in \argmin_{g \in \cC^{*}}L^{\text{(MSDA)}}(g, \bP_X, \bP_f)$ satisfies $L^{\text{(MSDA)}}(h) \leq L^{\text{(MSDA)}}(g)$ for any continuous $g \in \cC$ 
\end{lemma}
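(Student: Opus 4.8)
The plan is to adapt the pointwise-minimization argument of Chidambaram \etal \cite{chidambaram2021towards} to the general MSDA loss. The crucial observation is that, for a fixed classifier $g \in \cC$, the loss $L^{\text{(MSDA)}}(g) = \sum_{i,j} L^{\text{(MSDA)}}_{i,j}(g)$ depends on $g$ only through its values $g(\msda{x}{z_1,z_2}(\lambda,1-\lambda))$ at the mixed inputs $M \odot z_1 + (1-M)\odot z_2$. I would first rewrite each term $L^{\text{(MSDA)}}_{i,j}(g)$ by a change of variables so that it becomes an integral over the \emph{location} of the mixed point, weighted by the pushforward of the mixing measure; the soft label attached to each mixed point is the convex combination determined by $\lambda$ and the class memberships $i,j$. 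In this form the loss is manifestly an integral of local contributions, each of which sees $g$ only at a single point.

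Next I would localize. Fix any $x \in X_{\text{MSDA}}$ and any $\epsilon > 0$, and consider the restriction of the loss to mixed points landing in the ball $B_\epsilon(x)$, i.e.\ the quantity $L^{\text{(MSDA)}}(\theta)|_{B_\epsilon(x)}$ appearing in the definition of $\cC^*$, where $\theta$ is the value the classifier takes on that ball. Since $g$ is continuous, on $B_\epsilon(x)$ it is uniformly close to the constant $g(x)$, so the restricted loss evaluated at $g$ is within $o(1)$ (as $\epsilon \to 0$) of the restricted loss of the constant $g(x)$. By construction, for every constant $\theta$ we have $L^{\text{(MSDA)}}(\theta_\epsilon^\star)|_{B_\epsilon(x)} \le L^{\text{(MSDA)}}(\theta)|_{B_\epsilon(x)}$, where $\theta_\epsilon^\star = \argmin_{\theta} L^{\text{(MSDA)}}(\theta)|_{B_\epsilon(x)}$, and by definition of $\cC^*$ any $h \in \cC^*$ satisfies $h(x) = \lim_{\epsilon \to 0}\theta_\epsilon^\star$ wherever this limit exists.

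I would then aggregate the local comparisons. Covering $X_{\text{MSDA}}$ by balls of radius $\epsilon$ and summing (or integrating) the inequality $L^{\text{(MSDA)}}(h)|_{B_\epsilon(x)} \le L^{\text{(MSDA)}}(g)|_{B_\epsilon(x)} + o(1)$, then letting $\epsilon \to 0$, yields $L^{\text{(MSDA)}}(h) \le L^{\text{(MSDA)}}(g)$. Points outside $X_{\text{MSDA}}$ carry no mass of the mixed-point distribution (there $\bigcup_{i,j} A_{x,\epsilon}^{i,j}$ has measure zero for small $\epsilon$), so they contribute nothing to either side and the value of $g$ on them is irrelevant.

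The hard part will be the measure-theoretic bookkeeping in the localization step: making precise the change of variables to the mixed-point domain (whose pushforward measure need not have a density), justifying the uniform approximation $g \approx g(x)$ on $B_\epsilon(x)$ together with a dominated-convergence argument so that the $\epsilon \to 0$ limits of the restricted losses and of their minimizers behave well, and handling the set of $x$ where the defining limit for $\cC^*$ fails to exist (there the constant-on-the-ball comparison still produces a non-negative loss gap, so the inequality is preserved). Because the general MSDA loss differs from Mixup only in that the mixing weight $M$ is a possibly vector-valued random mask rather than the scalar $\lambda \vec{1}$, and $g$ still enters solely through its value at the single mixed point, every step of the Chidambaram \etal argument carries over essentially verbatim once the loss is written in this localized integral form.
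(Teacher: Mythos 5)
Your proposal follows essentially the same route as the paper, which gives no independent argument for this lemma and simply states that the proof is identical to the corresponding lemma of Chidambaram \etal; your localization argument (rewrite the loss as an integral over the location of the mixed point, compare a continuous $g$ on $B_\epsilon(x)$ to the constant $g(x)$, invoke the pointwise optimality built into the definition of $\cC^{*}$, and aggregate over $X_{\text{MSDA}}$) is a faithful reconstruction of that argument, and the generalization to a vector-valued mask $M$ is indeed immaterial since $g$ enters only through its value at the single mixed point. One detail in your sketch is off, however: at points $x$ where the defining limit for $\cC^{*}$ fails to exist, the value $h(x)$ is unconstrained, so the ``constant-on-the-ball comparison still produces a non-negative loss gap'' claim does not hold there --- $h$ could in principle be worse than $g$ on that set. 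The standard fix, and the one used in the argument you are adapting, is to exploit the fact that $h$ is a \emph{global} minimizer over $\cC^{*}$: define $h'$ to agree with the pointwise limit where it exists and with $g$ where it does not, observe that $h' \in \cC^{*}$ (membership in $\cC^{*}$ constrains a function only where the limit exists), and conclude $L^{\text{(MSDA)}}(h) \leq L^{\text{(MSDA)}}(h') \leq L^{\text{(MSDA)}}(g)$, the second inequality now following from the pointwise comparison on the good set and from equality of $h'$ and $g$ on the bad set.
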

\begin{theorem}
For any point $x \in X_{\text{MSDA}}$ and $\epsilon >0$, there exists a continuous function $h_\epsilon$ satisfying 
\begin{align*}
    h^i_\epsilon(x) = \frac{\xi_{x, \epsilon}^{i,i}+({\sum_{j \neq i} \xi_{x, \epsilon, \lambda}^{i,j}} + (\xi_{x, \epsilon}^{j,i} - \xi_{x, \epsilon, \lambda}^{j,i}))}{\sum_{q=1}^k \xi_{x, \epsilon}^{q,q} + \sum_{j \neq q} (\xi_{x, \epsilon, \lambda}^{q,j} + (\xi_{x, \epsilon}^{j,q} - \xi_{x, \epsilon, \lambda}^{j,q}))},
\end{align*}
and its limits exist when $\epsilon \to 0$.
\end{theorem}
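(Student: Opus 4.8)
The plan is to compute, for each fixed $x \in X_{\text{MSDA}}$ and each $\epsilon > 0$, the minimizer over the probability simplex of the localized loss $L^{\text{(MSDA)}}(\theta)|_{B_\epsilon(x)}$, and then verify that this minimizer is exactly the claimed $h_\epsilon$. First I would write out the restricted loss explicitly. When $z_1 \in X_i$ and $z_2 \in X_j$ the mixed label is the soft label $\lambda e_i + (1-\lambda)e_j$, so the cross-entropy of a constant prediction $\theta$ against it is $-\lambda\log\theta_i - (1-\lambda)\log\theta_j$. Integrating the implicit indicator $M\odot z_1 + (1-M)\odot z_2 \in B_\epsilon(x)$ carried by the $\xi$ quantities and summing over class pairs yields
\[
L^{\text{(MSDA)}}(\theta)|_{B_\epsilon(x)} = -\sum_{i,j}\left(\xi_{x,\epsilon,\lambda}^{i,j}\log\theta_i + (\xi_{x,\epsilon}^{i,j}-\xi_{x,\epsilon,\lambda}^{i,j})\log\theta_j\right),
\]
directly from the definitions of $\xi_{x,\epsilon}^{i,j}$ and $\xi_{x,\epsilon,\lambda}^{i,j}$.

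Next I would collect, for each coordinate $c$, the total coefficient $w_c$ multiplying $\log\theta_c$. Contributions come from the terms with $i=c$, giving $\sum_j \xi_{x,\epsilon,\lambda}^{c,j}$, and from the terms with $j=c$, giving $\sum_i(\xi_{x,\epsilon}^{i,c}-\xi_{x,\epsilon,\lambda}^{i,c})$. Separating the diagonal index from the off-diagonal sum, the diagonal contribution $\xi_{x,\epsilon,\lambda}^{c,c} + (\xi_{x,\epsilon}^{c,c}-\xi_{x,\epsilon,\lambda}^{c,c})$ collapses to $\xi_{x,\epsilon}^{c,c}$, and after renaming the remaining summation index this produces exactly the numerator in the statement. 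The restricted loss is then $-\sum_c w_c\log\theta_c$ up to an additive constant, and minimizing it over the simplex is a standard Gibbs-inequality (equivalently, Lagrange-multiplier) computation whose solution is $\theta_c^\star = w_c / \sum_{c'} w_{c'}$; this is precisely $h_\epsilon^c(x)$.

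Finally I would establish continuity of $h_\epsilon$ in $x$ and the existence of $\lim_{\epsilon\to 0}h_\epsilon(x)$. Continuity follows because each $\xi_{x,\epsilon}^{i,j}$ is an integral of a fixed measure over the moving ball $B_\epsilon(x)$ and hence depends continuously on $x$, while the denominator $\sum_{c'} w_{c'}$ is strictly positive on $X_{\text{MSDA}}$ by definition of that set (the union $\bigcup_{i,j}A_{x,\epsilon}^{i,j}$ has positive measure for every $\epsilon$); the quotient of continuous functions with nonvanishing denominator is continuous. For the limit I would normalize numerator and denominator by the measure of $\bigcup_{i,j}A_{x,\epsilon}^{i,j}$ and invoke a Lebesgue-differentiation argument, exactly as in Chidambaram \etal \cite{chidambaram2021towards}, so that both converge to the associated densities at $x$ and the ratio converges. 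The main obstacle is this last measure-theoretic step: one must verify the $\xi$ quantities are well-defined, that the normalized denominator stays bounded away from zero, and that the differentiation limit genuinely exists for every $x \in X_{\text{MSDA}}$. The algebra producing the formula is routine, so the care is concentrated in importing the regularity assumptions of \cite{chidambaram2021towards} that make the $\epsilon\to 0$ limit rigorous.
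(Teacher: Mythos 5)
Your proposal is correct and takes essentially the same route as the paper, which for this theorem simply defers to the technique of Chidambaram et al.: restrict the cross-entropy loss to $B_\epsilon(x)$, collect the coefficient of each $\log \theta_c$ into the $\xi$ quantities, and minimize over the simplex to obtain the stated ratio. Your bookkeeping (the diagonal collapse to $\xi_{x,\epsilon}^{c,c}$ and the index renaming in the off-diagonal sum) reproduces the numerator exactly, and your cautions about the $\epsilon \to 0$ limit mirror the paper's own "when the limit exists" hedging.
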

We give an assumption: for a point $x \in X_{\text{MSDA}}$, there exists a class $i$ that $x$ is closest to $X_i$ for arbitrary MSDA expression of $x$, and  $x$ cannot be expressed by MSDA expression between non-$i$ classes. The formal assumption and its geometric intuition can be found in \cite{chidambaram2021towards}.
\begin{theorem}
\label{thm:close}
If $x$ satisfies the above assumption, with respect to a class $i$, then for every $h \in \argmin_{g \in \cC^{*}} L_{\text{MSDA}}(g)$, we have that $h$ classifies $x$ as the class $i$ and $h$ is continuous at x.
\end{theorem}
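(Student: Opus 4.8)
The plan is to build directly on the preceding theorem, which already furnishes the pointwise minimizer $h^i_\epsilon(x)$ in closed form, and then to analyze its $\epsilon \to 0$ limit under the stated assumption, which I split into two halves: (A) $x$ is closest to $X_i$ in every admissible MSDA expression, and (B) $x$ admits no MSDA expression between two non-$i$ classes. The first step is to reinterpret the formula. Writing $N_q(\epsilon) := \xi_{x,\epsilon}^{q,q} + \sum_{j\neq q}(\xi_{x,\epsilon,\lambda}^{q,j} + (\xi_{x,\epsilon}^{j,q} - \xi_{x,\epsilon,\lambda}^{j,q}))$ so that $h^i_\epsilon(x) = N_i(\epsilon)/\sum_q N_q(\epsilon)$, I would show that $\sum_q N_q(\epsilon)$ collapses to the total mixing measure $\sum_{a,b}\xi_{x,\epsilon}^{a,b}$ and that each $N_q(\epsilon)$ is exactly the accumulated label-weight placed on class $q$: in every mixing the soft label $\lambda e_a + (1-\lambda)e_b$ has coordinates summing to one, and the $\lambda$ / $(1-\lambda)$ bookkeeping in $N_q$ records the class-$q$ coordinate ($\lambda$ when $q$ is first, $1-\lambda$ when $q$ is second, and $1$ on the diagonal). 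Thus $h^i_\epsilon(x)$ is the average weight on class $i$, and the classification claim reduces to proving $N_i(\epsilon) > N_q(\epsilon)$ for every $q \neq i$ and all small $\epsilon$.

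Next I would invoke the two halves of the assumption. Part (B) implies $A_{x,\epsilon}^{a,b}$ has measure zero for $a,b\neq i$ once $\epsilon$ is small, so $\xi_{x,\epsilon}^{a,b}=0$ for such pairs; hence for $q\neq i$ the only surviving contributions to $N_q(\epsilon)$ come from the mixed pairs $(i,q)$ and $(q,i)$, and $N_q(\epsilon)$ is the total class-$q$ weight carried by $i$-versus-$q$ mixings. Part (A) I would translate into the per-mixing inequality that the class-$i$ coordinate of the label strictly exceeds the other class's coordinate (i.e.\ is $>1/2$) in each such mixing. Integrating this pointwise dominance against the positive mixing measure shows that the class-$i$ weight drawn from the $(i,q)$ and $(q,i)$ mixings alone already exceeds $N_q(\epsilon)$; adding the nonnegative $\xi_{x,\epsilon}^{i,i}$ and the class-$i$ weight from the remaining $i$-$q'$ mixings yields $N_i(\epsilon) > N_q(\epsilon)$. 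Since $x\in X_{\text{MSDA}}$ guarantees some mixing has positive measure, which (B) forces to involve class $i$, we also get $N_i(\epsilon)>0$, so the ratio is well defined and $\lim_{\epsilon\to 0} h^i_\epsilon(x) > \lim_{\epsilon\to 0} h^{i'}_\epsilon(x)$ for all $i'\neq i$; hence $h$ classifies $x$ as class $i$.

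For continuity at $x$, the plan is to observe that the assumption is an open condition: by a continuity/compactness argument on the disjoint supports $X_j$, the ``closest class is $i$'' and ``no non-$i$ decomposition'' properties persist on a neighborhood $U$ of $x$, so $\argmax_{i'} h^{i'}(x') = i$ holds throughout $U$. Combined with the fact that $x'\mapsto \xi^{a,b}_{x',\epsilon}$ and $x'\mapsto \xi^{a,b}_{x',\epsilon,\lambda}$ vary continuously where the underlying sets do, the closed-form ratio $h^{i}_\epsilon(\cdot)$ is continuous on $U$ and the limiting $h$ inherits continuity at $x$; this is exactly the mechanism of Chidambaram \etal \cite{chidambaram2021towards}, adapted here to the general mask $M$.

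I expect the main obstacle to be the faithful translation of the qualitative geometric assumption (A) into the quantitative per-mixing weight bound, together with justifying the interchange of the $\epsilon\to 0$ limit with the measure-theoretic accounting — in particular, verifying that the non-$i$ terms genuinely vanish (rather than merely becoming small) and that the limits appearing in the theorem's formula exist, which is precisely where the disjoint-support hypothesis and the structure of $X_{\text{MSDA}}$ do the essential work.
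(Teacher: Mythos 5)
Your plan is correct and follows essentially the same route as the paper, which for this statement gives no written argument of its own but declares the proof ``identical to Chidambaram \etal''; your reconstruction --- reading $h^i_\epsilon(x)$ as the normalized class-$i$ label mass, using the no-non-$i$-decomposition half of the assumption to annihilate $\xi^{a,b}_{x,\epsilon}$ for $a,b\neq i$, using the closeness half to get per-mixing dominance of the class-$i$ label coordinate, and deriving continuity from the openness of the assumption --- is exactly the mechanism of that cited argument, adapted to the general mask $M$. You also correctly flag the one genuinely delicate point (turning the geometric closeness condition into the quantitative label-weight bound when $M$ and $\lambda$ are only loosely coupled), which is where the formal version of the assumption deferred to the reference does the work.
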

\cref{thm:close} indicates that if we observe a new sample that can satisfy the above assumption with the class $i$, which is also a distance up to $\min_j d(X_i, X_j)/2$ from the class $i$, the model trained with MSDA will classify it as $i$. Therefore, \cref{thm:close} is closely related to the generalization properties of MSDA. All proof of this section is identical to Chidambaram \etal.

\section{Experimental Details}
\label{appendix::experimental}

\paragraph{Regularized input gradients experiments (Figure 4)}
We investigate the amount of the regularized input gradients by $|\partial_v f_\theta(x) \partial_{v+p} f_\theta(x)|$ with respect to the pixel distance vector $p$. 
We then compute the partial gradients for the validation images $x$, which have not been seen during training, and normalize them to sum to $1$ as 
\begin{equation*}
    \text{PartialGradProd}(x,p) = \max_{v} |\partial_v f_\theta (x) \partial_{v+p} f_\theta (x)|
\end{equation*}
for different $f_\theta$ trained by different MSDA methods.
Finally, we visualize the pixel-wise maximum values of $\text{PartialGradProd}(x,p)$ for the validation images $x$ in Figure 4. 
We train ResNet-50 \cite{he2016deep} models on the ImageNet-1K dataset with $64 \times 64$ image size. 
We show additional visualization in \cref{fig:appendix_partial_grad}, where the $x$-axis denotes the pixel indices sorted by the pixel distance $\norm{p}$.
As shown in Figure 4 and \cref{fig:appendix_partial_grad}, CutMix more regularizes the partial gradient products $|\partial_v f_\theta(x) \partial_{v+p} f_\theta(x)|$ for a closer $p$ than Mixup.  

\begin{figure}[t]
\centering
\includegraphics[width=0.7\textwidth]{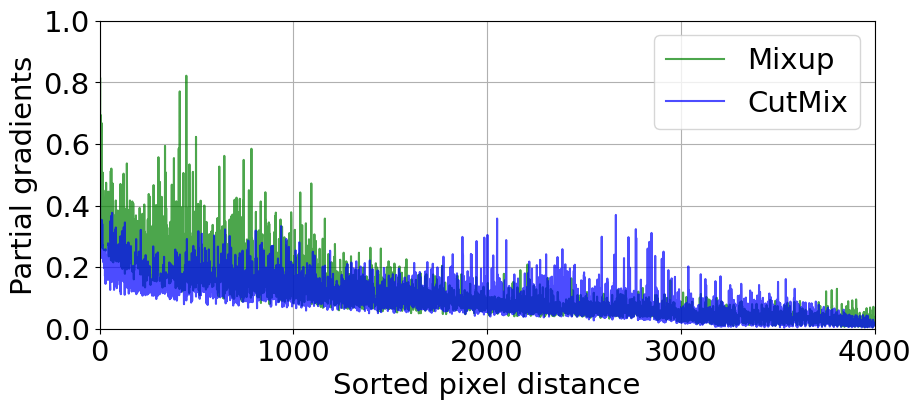}
\caption{Comparison of regularized partial gradients between Mixup and CutMix along the sorted pixel indices. }
\label{fig:appendix_partial_grad}
\end{figure}

\paragraph{CIFAR-100 experiments}
We follow the experimental setting of \cite{kim2020puzzle} and utilize \cite{kim2020puzzle}'s codebase\footnote{https://github.com/snu-mllab/PuzzleMix}. 
We train all networks for 300 epochs using the SGD optimizer with a learning rate of $0.2$ and a batch size of $100$. The learning rate is decayed by a factor of $0.1$ at $100$ and $200$ epochs. 
We set the hyper-parameter $\alpha$ for Mixup, CutMix, and \stochmixupcutmix to $1$.
$\alpha$ for \ours and \oursgaussian were set to $1$ and $0.5$, respectively. 
We use $r=0.5$ for \ours.
All the experiments in CIFAR-100 were repeated three times, and we report the average accuracy.

\paragraph{ImageNet-1K experiments}
We utilize \textit{timm}\footnote{https://github.com/rwightman/pytorch-image-models} pytorch codebase.
We train ResNet-50 \cite{he2016deep} for 300 epochs using the SGD optimizer with a learning rate of  $0.1$, weight decay of  $2\times10^{-5}$, and batch size of $512$. 
We use cosine learning rate scheduling. 
We set the hyper-parameter $\alpha$ for all methods except Mixup to $1$, while Mixup has $\alpha=0.8$.
We use $r=0.75$ for \ours.

\revision{
\section{Additional Experiments}
\label{appendix::additional-experiments}
\paragraph{\ours in the scenarios of Table~\ref{table:ablation}.}
}

{
\begin{table}[t]
\tabcolsep=0.1cm
\centering
\small
\caption{\revision{\small \textbf{\ours performances in both scenarios of Table~\ref{table:ablation}.}}}
\label{table:appendix:scenarios}
\vspace{.5em}
\begin{tabular}{lccccc}
\toprule
                                   & {Mixup}       & {CutMix}     & $\Delta$ (CutMix - Mixup) & \revision{\ours ($r$=0.5)} & \revision{\ours ($r$=0.75)} \\ \midrule
{Scenario 1: Large crop}                       &   58.3                    &    \textbf{64.4}      &  \textcolor{darkergreen}{\textbf{+6.1}}        &     \revision{\textbf{61.3} (+\textbf{3.0} vs. Mixup)} & \revision{\textbf{63.7} (+\textbf{5.4} vs. Mixup)}    \\ 
{Scenario 2: Small crop}                       &   \textbf{67.7}                    &     67.0     &  \textcolor{red2}{\textbf{-0.7}}           &     \revision{\textbf{67.6} (+\textbf{0.6} vs. CutMix)} & \revision{\textbf{67.2} (+\textbf{0.2} vs. CutMix)} \\ 
\bottomrule
\end{tabular}
\vspace{-1em}
\end{table}
}

\revision{
We conduct \ours on two scenarios in Table~\ref{table:ablation}. Results are in Table~\ref{table:appendix:scenarios}. We use $r=0.5$ and $r=0.75$ for \ours. 
\ours shows consistently better performance than Mixup and CutMix for the scenarios 1 and 2, respectively, with meaningful performance gaps. Through the results, we can empirically confirm our proposed method enjoyed the advantages of Mixup and CutMix. 

\paragraph{Robustness benchmarks.}

\begin{table}[t]
\centering
\small
\caption{\revision{\small \textbf{Robustness benchmarks.} Comparison of various MSDA methods on ResNet-50 architecture.}}
\label{table:appendix:robustness}
\vspace{.5em}
\begin{tabular}{lcccc}
\toprule
Augmentation Method             & ImageNet-1K             & ImageNet-occ & ImageNet-C & FGSM \\ \midrule
Vanilla (no MDSA)               &		75.68 (+0.00)           & 55.26 (+0.00) & 42.57 (+0.00) & 8.55 (+0.00)\\
Mixup	                        &		77.78 (+2.10)           & 60.34 (+5.08) & \textbf{51.73 (+9.16)} & 27.78 (+19.23)\\
CutMix	                        &		78.04 (+2.36)           & \textbf{71.51 (+16.25)} & 44.18 (+1.55) & 33.63 (+25.08)\\
\rowcolor{Gray}
\textbf{\ours} (ours)	        &		\textbf{78.38} (+2.70)  & \textbf{71.13 (+15.87)} & \textbf{46.37 (+3.80)} & \textbf{34.98 (+26.44)} \\
\rowcolor{Gray}
\textbf{\oursgaussian} (ours)	&		{78.13} (+2.45)         & 62.76 (+7.50) & 45.97 (+3.40) & 31.02 (+21.47) \\ \bottomrule
\end{tabular}
\end{table}

As observed by the previous study \cite{chun2019icmlw}, the different choice of MSDA methods also affects the extreme case of the test samples, \eg, distribution shifts.
In this experiments, we provide an understanding of the relationship between MSDA and various test scenarios and through the lens of our theoretical analysis.
We conduct various MSDA methods on robustness benchmarks such as ImageNet-1K occlusion accuracy (center occluded images following \cite{yun2019cutmix, chun2019icmlw}), ImageNet-C accuracy \cite{hendrycks2018benchmarking} and adversarially attacked ImageNet test accuracy by FGSM attack \cite{fgsm}.
Results are in Table~\ref{table:appendix:robustness}. 
We use the same experimental settings of the ImageNet classification in Table~\ref{table:imagenet-classification}.
Overall results show that \ours and \oursgaussian are located in between CutMix and Mixup.

CutMix performs better than Mixup in the occlusion accuracy. Here, the local occluded areas have no information to distinguish objects, but other local areas are informative. Hence, it is important to capture shorter-relationship rather than global-relationship. Thus we can expect that CutMix is better than Mixup, and not surprisingly, \ours and \oursgaussian are located in between CutMix and Mixup.

ImageNet-C style corruptions (\eg, adding Gaussian noise for the entire image) distort the local information. In this case, the shorter-distance relationships are significantly damaged and sometimes useless to distinguish the object, hence the longer-distance relationships are more important.
Hence, we can expect Mixup works better than CutMix in ImageNet-C.
As \ours and \oursgaussian less weight shorter-distance relationships than CutMix (but more weight than Mixup), we can observe that \ours and \oursgaussian ImageNet-C performances are better than CutMix, but worse than Mixup.

\begin{table}[t]
\caption{\revision{\small \textbf{Ablation study on hyper-parameters.}}}
\begin{subtable}[h]{0.3\textwidth}
\tabcolsep=0.06cm
\centering
\small
\caption{\revision{\small \textbf{Impact on $r$ for \ours.}}}
\label{table:appendix:ablationa}
\vspace{.5em}
\begin{tabular}{lc}
\toprule
PreActRN18                          &  CIFAR-100 acc \\ \midrule
Vanilla (no MDSA)               	&    76.73        \\
Mixup	($r=0$)                 	&      77.21         \\
CutMix	($r=1.0$)                    &       78.66       \\
\textbf{\ours} ($r=0.75$) 	        	&   \textbf{79.43}        \\
\rowcolor{Gray}
\textbf{\ours} ($r=0.5$) 	        	&    79.25       \\
\textbf{\ours} ($r=0.25$) 	        	&      78.05     \\\bottomrule
\end{tabular}
\end{subtable}
\hfill
\begin{subtable}[h]{0.3\textwidth}
\tabcolsep=0.06cm
\centering
\small
\caption{\revision{\small \textbf{Impact on $\alpha$ for \ours.}}}
\label{table:appendix:ablationb}
\vspace{.5em}
\begin{tabular}{lc}
\toprule
PreActRN18                          &  CIFAR-100 acc \\ \midrule
Vanilla (no MDSA)               	&     76.73        \\
Mixup	($\alpha=1.0$)                 	&      77.21         \\
CutMix	($\alpha=1.0$)                    &       78.66       \\
\textbf{\ours} ($\alpha=0.5$) 	        	&   {78.47}        \\
\rowcolor{Gray}
\textbf{\ours} ($\alpha=1.0$) 	        	&    \textbf{79.25}       \\
\textbf{\ours} ($\alpha=2.0$) 	        	&      78.85     \\\bottomrule
\end{tabular}
\end{subtable}
\hfill
\begin{subtable}[h]{0.3\textwidth}
\tabcolsep=0.06cm
\centering
\small
\caption{\revision{\small \textbf{Impact on $\alpha$ for \oursgaussian}}}
\label{table:appendix:ablationc}
\vspace{.5em}
\begin{tabular}{lc}
\toprule
PreActRN18                          &  CIFAR-100 acc \\ \midrule
Vanilla (no MDSA)               	&     76.73        \\
Mixup	($\alpha=1.0$)                 	&      77.21         \\
CutMix	($\alpha=1.0$)                    &       78.66       \\
\textbf{\oursgaussian} ($\alpha=0.25$) 	        	&   {78.60}        \\
\rowcolor{Gray}
\textbf{\oursgaussian} ($\alpha=0.5$) 	        	&    \textbf{79.17}       \\
\textbf{\oursgaussian} ($\alpha=0.75$) 	        	&      78.64     \\
\textbf{\oursgaussian} ($\alpha=1.0$) 	        	&      79.05     \\\bottomrule
\end{tabular}
\end{subtable}
\label{table:appendix:ablation}
\end{table}

\paragraph{Ablation study on hyper-parameters.}

We study the impacts of hyper-parameters $\alpha$ and $r$ for \ours and \oursgaussian on CIFAR-100 classification.
We use the PreActResNet-18 with the same experimental setting as in Table~\ref{table:cifar-classification}.
Results are in Table~\ref{table:appendix:ablation}.
We highlight the results with our hyper-parameter choices ($r=0.5$, $\alpha=1.0$ for \ours, $\alpha=0.5$ for \oursgaussian) in the gray cells. 
For $r$, we find that \ours with $r=0.75$ performs better than $r=0.5$ with a marginal gap (+0.18\%p). 
For $\alpha$, our hyper-parameters show the best performance.
Overall results confirm that \ours and \oursgaussian are not sensitive to those hyper-parameters and consistently show better or compatible performance against Mixup and CutMix.

\revision{
\section{Regularizer coefficients of HMix}
\label{appendix::hmix-regularizer}
Define
\begin{align}
    h(x,s) & = \min(x, n-s), \qquad l(x, s) =  \max(x - s, 0), \nonumber
    \\
    \medmath{a_{jk,s}} &= \medmath{\frac{\max(\min(h(j_1,s) - l(k_1,s), h(k_1,s) - l(j_1,s)), 0)  \max(\min(h(j_2,s) - l(k_2,s), h(k_2,s) - l(j_2,s)), 0)}{(n - s)^2}  }, \label{eqn::ours-coef}
    \\
    o_s &= \frac{\lambda n^2}{n^2 - s^2}, \nonumber
    \\
    v(p, s)& =  \frac{(h(p_1, s) - l(p_1, s)) * (h(p_2, s) - l(p_2, s))}{(n-s)^2}. \nonumber
\end{align}

Note that \eqref{eqn::ours-coef} is extension of \eqref{eqn::cutmix-coef} by putting $s = [\sqrt{1-\lambda}n]$ in \eqref{eqn::ours-coef}. Then, HMix with hyperparameter $r$ has regularizer coefficient $a_{ij}$ as 
\begin{align*}
    s &= [\sqrt{1-\lambda}\sqrt{r}n]\\
    a_{ij} &= o(s)(1- o(s))(v(i, s) + v(j,s)) + o(s)a_{ij,s} + (1- o(s))(1 - o(s)).
\end{align*}
We plotted this value in \cref{fig:coefcomp} when $r = 0.7$.

}

\section{What MSDA can be applied in our Thereoms?}
\label{appendix::other-msdas}
Our theorems can be applied to any MSDA method with an analogous formula, regardless of the assumption of the shape of the mask. In this paper, we mainly focused on Mixup and CutMix because they are the most common MSDA methods among the whole MSDA family as well as their behaviors are distinctly different in terms of our theorem. In this section, we note several nontrivial remarks for understanding the setup of our paper. 

\paragraph{ResizeMix.}
ResizeMix \cite{qin2020resizemix} can be explained by our theorem if we add the assumption on the dataset $\mathcal{D}_X$ that $\mathcal{D}_X$ has all resized versions of the image. ResizeMix uses the resized version of input (i.e., one of the mixed patches is the “resized” version, not a cropped one) where the random resize is applied to the whole dataset. In other words, ResizeMix is a special case of CutMix when we apply a special version of random resize crop operation. Hence, if we assume a different version of random resize crop rather than the standard version (independent of our theoretical results and underlying assumptions), ResizeMix is equivalent to CutMix, which leads to the same theoretical result as CutMix.

\paragraph{FMix.}
FMix \cite{harris2020fmix} randomly samples the mask from the Fourier space. Since FMix is one of the static MSDA, we can directly apply our Theorem 1-4. 

\paragraph{SaliencyMix, PuzzleMix, and Co-Mixup.}
SaliencyMix \cite{uddin2020saliencymix} uses saliency map to generate new MSDA sample. PuzzleMix \cite{kim2020puzzle}, and Co-Mixup \cite{kim2021co} are dynamic MSDA, where they use saliency map and transport. These methods give a state-of-the-art performance. \cref{thm::MSDA-loss} can deal with this problem, but hard to interpret. To be specific, the second equality of \cref{eqn::thm1-GH} do not hold anymore; it is hard to interpret the approximated loss function as an input gradient / Hessian regularizer. 

\paragraph{StyleMix and Manifold Mixup.}
StyleMix \cite{hong2021stylemix} uses pre-trained style encoder and decoder. StyleMix linearly mixes content and style. Manifold Mixup \cite{verma2019manifold} mixes samples in the feature level. Therefore, the theorems cannot be directly applied to StyleMix and Manifold Mixup. %

\paragraph{AutoMix.}
Recently, AutoMix \cite{liu2021unveiling} gives state-of-the-art results. AutoMix utilized joint loss to generate the mask $M$: classification loss and generation loss for training $M$. Therefore, the mask depends on the mixing samples, indicating that AutoMix is a dynamic MSDA. As in previous paragraphs, \cref{thm::MSDA-loss} holds, but it is not easy to interpret each term. 

}

\end{document}